\newtheorem{theorem}{Theorem}
\newtheorem{lemma}{Lemma}
\newtheorem{proposition}{Proposition}
\begin{document}\sloppy
\begin{frontmatter}


\title{Overlap-aware meta-learning attention to enhance hypergraph neural networks for node classification}

\author[1]{Murong Yang}
\ead{mryang@shu.edu.cn}

\author[2,3]{Shihui Ying}
\ead{shying@shu.edu.cn}

\author[4,5]{Yue Gao}
\ead{gaoyue@tsinghua.edu.cn}

\author[1,6]{Xin-Jian Xu\corref{cor1}}
\ead{xinjxu@shu.edu.cn}

\address[1]{Department of Mathematics, College of Sciences, Shanghai University, Shanghai 200444, China}
\address[2]{
Shanghai Institute of Applied Mathematics and Mechanics, Shanghai University, Shanghai 200072, China}
\address[3]{School of Mechanics and Engineering Science, Shanghai University, Shanghai 200072, China}
\address[4]{BNRist, KLISS,
School of Software, Tsinghua University, Beijing 100084, China}
\address[5]{THUIBCS, Tsinghua University, Beijing 100084, China}
\address[6]{Qian Weichang College, Shanghai University, Shanghai 200444, China }

\cortext[cor1]{Corresponding author}

\begin{abstract}
Hypergraph neural networks (HGNNs) have emerged as a powerful framework for modeling higher-order interactions, yet this theoretical promise has not fully materialized in practice.
On one hand, existing networks typically employ a single type of attention mechanism, focusing on either structural or feature similarities during message passing.
On the other hand, assuming that all nodes in current hypergraph models have the same level of overlap may lead to suboptimal generalization.
To overcome these limitations, we propose a novel framework, Overlap-aware Meta-learning Attention for Hypergraph Neural Networks (OMA-HGNN).
First, we introduce a hypergraph attention mechanism that integrates both structural and feature similarities. Specifically, we linearly combine their respective losses with weighted factors for the HGNN model.
Second, we partition nodes into different tasks based on their diverse overlap levels and develop a multi-task Meta-Weight-Net (MWN) to determine the corresponding weighted factors.
Third, we jointly train the internal MWN model with the losses from the external HGNN model  and train the external model with the weighted factors from the internal model.
To evaluate the effectiveness of OMA-HGNN, we conducted experiments on six real-world datasets and benchmarked its performance against nine  state-of-the-art methods for node classification. The results demonstrate that OMA-HGNN excels in learning superior node representations and outperforms these baselines.
\end{abstract}

\begin{keyword}
hypergraph neural networks (HGNNs) \sep
attention mechanisms \sep
meta-learning
\end{keyword}

\end{frontmatter}

\section{Introduction}
\label{introduction}

Hypergraphs extend the concept of  dyadic graphs by enabling connections among more than two nodes, thereby providing a more accurate representation of higher-order relationships among data points in real-world scenarios. Due to their ability to model complex interactions, hypergraphs have demonstrated superior performance in various data mining tasks \cite{gao2020hypergraph, young2021hypergraph}. This versatile structure has been successfully applied across diverse domains, including social networks \cite{zhu2018social}, computer vision \cite{yu2012adaptive}, and natural language processing \cite{ding2020}.
To effectively capture these higher-order relationships, the design of an appropriate hypergraph model is crucial for fully leveraging the potential of this structure \cite{7707352}. Among various methods, hypergraph neural networks (HGNNs) have emerged as one of the most prominent models \cite{Yang2025Recent}. Similar to graph neural networks (GNNs) \cite{2016Semi, 2017Graph, yu2024gnn}, HGNNs integrate both the data structure and features as inputs. This integration allows HGNNs to exploit rich structural and feature information, leading to significant advancements in downstream tasks such as node classification \cite{2019Hypergraph, yadati2019hypergcn} and link prediction \cite{2021Heterogeneous,10.1145/3543507.3583256,wang2023hyconve}.
In recent years, numerous HGNN models have been proposed \cite{yadati2019hypergcn, 2020HyperSAGE, 2021UniGNN}.
These models effectively leverage hypergraph structures through message passing mechanisms to generate informative node features.
During message passing, information flows bidirectionally: from neighboring nodes to neighboring hyperedges, and then from these hyperedges back to the central node \cite{ chien2021you}. 
This bidirectional information flow enables nodes to better capture contextual relationships, thereby enhancing their representation quality and task performance. Typically, this process is guided by supervised signals derived from labeled nodes.

The first deep learning model designed specifically for hypergraphs is HGNN \cite{2019Hypergraph}. By leveraging the hypergraph Laplacian, HGNN has achieved notable success in visual object recognition tasks. However, it treats all connections uniformly, failing to account for the varying importance of different neighbors. Following similar developments in graph attention 
hypergraph attention approaches based on feature similarity (FS) have been introduced to dynamically capture the attention coefficients of neighbors \cite{9342986, Session-based2021}. Typically, these coefficients are derived by applying FS measures between a node and its incident hyperedges \cite{2020Hyper, bai2021hypergraph}. Meanwhile, recent advances in graph neural networks also emphasize the role of structural similarity (SS) in representation learning. For example, random walk kernels \cite{NEURIPS2020_ba95d78a} and quantum-based graph convolutions \cite{9521820} have been proposed to encode multi-scale and quantum structural patterns, while aligned  kernel-integrated filters and vertex convolutions \cite{Feng_You_Wang_Tassiulas_2022, 9646437} capture hierarchical and subgraph-level information beyond the 1-WL expressivity. These studies highlight that SS is crucial for assessing connection importance in graph models. Motivated by such findings, hypergraph neural networks have also started to incorporate SS metrics, such as node degree or shared neighbors, as alternative measures for calculating attention coefficients \cite{Jiang2020Node, Zareie2020Similarity-based}.

On one hand, existing HGNNs mostly rely on a single type of attention mechanism based either on SS or FS.
Spectral hypergraph convolution methods utilize the hypergraph Laplacian matrix to define convolution operations on hypergraphs \cite{2019Hypergraph, yadati2019hypergcn}. These methods can be viewed as capturing  SS between a central node and its neighboring hyperedges, thereby enabling efficient information aggregation. For example, Huang et al. \cite{2021UniGNN} proposed UniGCN, an extension of GCN that incorporates SS to reflect the relational connections and hierarchical positions of nodes.
In contrast, spatial hypergraph convolution methods typically leverage FS between a central node and its neighbor hyperedges during message passing. For instance, Zhang et al. \cite{2020Hyper} proposed Hyper-SAGNN, which employs an attention mechanism based on the average distance between static and dynamic embedding pairs of nodes. And Bai et al. \cite{bai2021hypergraph} introduced hypergraph convolution and hypergraph attention, calculating attention scores between nodes and their associated hyperedges with a feature-based similarity function. Similarly, Huang et al. \cite{2021UniGNN} proposed UniGAT, an extension of GAT for hypergraphs, which utilizes FS between nodes and their neighboring hyperedges.
Despite these advancements, these methods predominantly focus on a single type of attention, which limits their ability to capture the full range of information. Recently, Saxena et al. \cite{10.1145/3637528.3672047} proposed DPHGNN, a dual-perspective hypergraph neural network that integrates spectral inductive biases with spatial message passing through topology-aware attention and dynamic feature fusion. Nevertheless, DPHGNN primarily applies a single attention mechanism within each layer, leaving the integration of both attention mechanisms based on SS and FS as an open question.

A natural approach to sample weighting is to use a weighted linear combination of sample losses, where the weight function maps each training loss to a corresponding sample weight. Monotonically increasing functions assign larger weights to high-loss samples, effectively handling class imbalance in methods such as Boosting, AdaBoost \cite{friedman2000additive}, hard negative mining \cite{malisiewicz2011ensemble}, and focal loss \cite{lin2017focal}. Conversely, monotonically decreasing functions prioritize low-loss samples to mitigate noisy labels \cite{wang2017robust}. However, these predefined weighting schemes often rely on manually tuned hyperparameters or costly cross-validation. To address this, Shu et al. \cite{shu2019meta} proposed MW-Net, a meta-learning framework that adaptively learns sample weights from losses. 
Despite their effectiveness, these adaptive weighting strategies have not yet been applied in hypergraph models.

On the other hand, characterizing the microscopic structure of hypergraphs is essential for understanding both static structures and dynamic processes in complex systems, offering key insights for higher-order interaction modeling and representation learning. Recent studies have incorporated the concept of overlap into graph or hypergraph representation models \cite{NEURIPS2021_71ddb91e,LU2023109818}, but most rely on simple metrics such as common neighbors or resource allocation, which fail to capture the structural complexity of real-world hypergraphs. Other efforts have introduced novel overlap measures to analyze phase transitions \cite{Malizia2025Hyperedge}, yet these metrics were primarily designed for modeling critical phenomena rather than general structural properties. More recently, a principled measure satisfying three key axioms (hyperedge count, node distinctness, and hyperedge size) has been proposed, which provides a more comprehensive and theoretically grounded quantification of hyperedge overlaps \cite{10.1145/3442381.3450010}. This principled formulation lays a solid foundation for its integration into hypergraph neural networks, where it could help identify important structural regions, reduce redundancy, and adaptively handle different hyperedge scales; however, this direction has not yet been explored.

To overcome the aforementioned challenges, we propose a novel method named Overlap-aware Meta-learning Attention for Hypergraph Neural Networks (OMA-HGNN) for node classification. This method is a multi-task meta-learning hypergraph attention framework based on SS and FS while explicitly considering node overlap differences.
The main contributions of this paper are summarized as follows:

\begin{itemize}
    \item We integrate two attention mechanisms based on SS and FS by using a weighted linear combination of their losses for each sample as the total loss of the external HGNN model. To the best of our knowledge, this is the first attempt to use a combination of SS- and FS-based attention losses, weighted via a meta-learning approach, 
        enabling the model to capture both structural and feature-based relationships effectively.

    \item  We partition nodes into diverse tasks based on their overlap levels and develop a Multi-Task MWN (MT-MWN) to determine weighted factors. This work is the first to model  overlap levels with a multi-task framework  into hypergraph neural networks, to explicitly capture structural heterogeneity and achieve superior generalization across diverse hypergraph scenarios.

    \item We jointly train the external HGNN model and the internal MT-MWN model in each iteration. Specifically,
        the losses from the external model guide the parameter updates of the internal model, which in turn determines the weighted factors used to update the external model's parameters. This collaborative optimization enhances the model's ability to  automatically integrate attention fusion with node overlap-awareness.
\end{itemize}

The remainder of this paper is organized as follows. Section \ref{s2} reviews the related works. Section \ref{s3} provides the necessary preliminaries. Section \ref{s4} presents the proposed model OMA-HGNN, along with its optimization procedure. Section \ref{s5} demonstrates the effectiveness of OMA-HGNN through experimental comparisons. Finally, Section \ref{s6} concludes the paper.

\section{Related works}\label{s2}


\subsection{HGNNs}

Hypergraphs generalize traditional graphs by allowing hyperedges to connect any number of nodes, making them highly effective for modeling higher-order relationships. This structure is particularly useful in various real-world scenarios. For instance, coauthorship networks can be represented as hypergraphs, where hyperedges connect papers authored by the same individual \cite{yadati2019hypergcn}. In text analysis, hyperedges link articles containing the same keyword \cite{ding2020}, and in computer vision, they connect data points with similar features \cite{yu2012adaptive}. The generated hypergraphs can be utilized to build and train hypergraph models.
With the growing prominence of deep learning, HGNNs have garnered significant attention in recent researches \cite{2019Hypergraph, yadati2019hypergcn}.
Feng et al. \cite{2019Hypergraph} introduced HGNN, proposing a hypergraph Laplacian-based convolution operation that is approximated using truncated Chebyshev polynomials. Bai et al. \cite{bai2021hypergraph} integrated hypergraph attention mechanisms into the hypergraph convolution process, enabling the model to focus on the most informative relationships within the data. Yadati et al. \cite{yadati2019hypergcn} introduced a generalized hypergraph Laplacian that incorporates weighted pairwise edges, which enhances the model's robustness to noise.
In contrast to the above spectral convolution methods, spatial hypergraph convolution approaches directly propagate information on hypergraphs without defining the Laplacian matrix. Spatial models such as HyperSAGE \cite{2020HyperSAGE}, UniGNN \cite{2021UniGNN}, AllSet \cite{chien2021you} and KHGNN \cite{11063418} adopt a node-hyperedge-node information propagation scheme, which iteratively learns data representations. This aggregation process enables more flexible and direct message passing between nodes and hyperedges, making spatial HGNNs particularly well-suited for tasks requiring fine-grained relationship modeling.

\subsection{Message passing in HGNNs}
Message passing is a fundamental mechanism in HGNNs for learning node representations. This process is explicitly implemented in spatial convolution models. The core principle is that nodes connected within the same hyperedge tend to share similarities and are likely to have the same label \cite{zhang2017re}.
For example, MPNN-R \cite{yadati2020neural} treats hyperedges as nodes, thereby enabling the direct application of traditional message passing-based GNNs to hypergraphs. Another notable approach is AllSet \cite{chien2021you}, which integrates Deep Sets \cite{zaheer2017deep} and Set Transformer \cite{lee2019set} models to perform multiset-based message passing. This integration enhances the flexibility of the message passing process.
Building on these ideas, Arya et al. \cite{2020HyperSAGE} introduced HyperSAGE, which employs a two-stage message passing procedure to enhance the efficiency and effectiveness of node classification in hypergraphs. More recently, Huang et al. \cite{2021UniGNN} proposed UniGNN, a unified framework that generalizes the message passing process across both graphs and hypergraphs.
Despite these advancements, existing message passing mechanisms either neglect attention or rely solely on a single type of attention, which restricts their ability to capture nuanced relationships among nodes.

\subsection{Meta-learning for sample weighting}
Recent advancements in meta-learning have significantly influenced the development of methods for optimizing sample weight determination automatically \cite{hospedales2021meta, 2021Learning}. These methods are particularly relevant in addressing challenges associated with training deep neural networks using weakly labeled data.
For instance,
Dehghani et al. \cite{dehghani2017fidelity} proposed a semi-supervised approach that integrates label quality into the learning process to boost training efficiency. In addition, Fan et al. \cite{fan2018learning} introduced an automated teacher-student framework in which the student model learns from a teacher model that dynamically determines the optimal loss function and hypothesis space. Building on this, Jiang et al. \cite{jiang2018mentornet} developed a Student-Teacher network designed to learn a data-driven curriculum and enhance the learning procedure.
And Ren et al. \cite{ren2018learning} adaptively assigned weights to training samples based on gradient directions, eliminating the need for manual hyperparameter tuning. Later, Shu et al. \cite{shu2019meta} introduced MW-Net that derives explicit weighting schemes, demonstrating significantly better performance than traditional predefined methods.
While these works have optimized sample weighting for general deep learning tasks through meta-learning, they fall short in  handling the complexities of hypergraph structures and the integration of multi-attention mechanisms.

\subsection{Adaptive weighting via multi-task meta-learning}
From a task-distribution perspective, meta-learning aims to develop task-agnostic algorithms by leveraging knowledge from a diverse set of training tasks. This approach enables models to generalize effectively to new tasks by learning underlying patterns and adaptability mechanisms
\cite{hospedales2021meta}.
For example, MW-Net \cite{shu2019meta} derives a weighting function to optimize sample importance. However, its applicability relies on the assumption that all training tasks follow a similar distribution \cite{finn2017model, NEURIPS2020_0a716fe8}. In practice, this assumption is often violated in complex scenarios where tasks may exhibit heterogeneous bias configurations.
To resolve this issue, CMW-Net \cite{Shu2023} is proposed to adapt the meta-learning framework by treating each training class as an independent learning task and imposing class-specific weighting schemes. This approach enhances generalization in heterogeneous settings by accommodating diverse task distributions.
Despite these advancements, existing methods primarily focus on tasks involving pairwise relationships and fail to explicitly account for the high-order relationships inherent in real-world data.

\section{Preliminaries}\label{s3}

\subsection{Hypergraphs}

Unlike a simple graph, a hyperedge in a hypergraph can connect multiple nodes simultaneously. Given a hypergraph \(\mathcal{G} = (\mathcal{V}, \mathcal{E})\), where \(\mathcal{V}\) is the set of nodes and \(\mathcal{E}\) is the set of hyperedges, the structure of the hypergraph is represented by an incidence matrix \(\mathbf{H}\) with dimension \(|\mathcal{V}| \times |\mathcal{E}|\):
\[
\mathbf{H}(v_i, e_j) =
\begin{cases}
1, & \text{if } v_i \in e_j, \\
0, & \text{if } v_i \notin e_j,
\end{cases}
\]
where \(v_i \in \mathcal{V}\) and \(e_j \in \mathcal{E}\).

The feature matrix of the hypergraph \(\mathcal{G}\) is denoted as \(\mathbf{X} \in \mathbb{R}^{|\mathcal{V}| \times d}\), where each row \(\mathbf{x}_i\) of \(\mathbf{X}\) represents the \(d\)-dimensional feature vector of node \(v_i\). Additionally, \(\mathbf{y}_i \in \{0,1\}^C\) denotes the one-hot encoded class label vector corresponding to node \(v_i\).

\subsection{Two attention mechanisms based on SS and FS}
\label{3.2}

In graph convolutional networks, the update rule for node \(v_i\) is given by:

\begin{equation}
\mathbf{x}_i^{(t+1)} = \sigma\left( \mathbf{x}_i^{(t)} \mathbf{w}^{(t)} + \sum_{v_j \in N(v_i)} s_{ij}^{(t)} \mathbf{x}_j^{(t)} \mathbf{w}^{(t)} \right),
\end{equation}
where \(N(v_i)\) denotes the set of neighboring nodes of \(v_i\).

On one hand, GCN \cite{2016Semi} employs an attention mechanism based on SS, which is calculated as
\begin{equation}
s_{ij}^{(t)} = \frac{1}{d_i d_j},
\end{equation}
where \(d_i\) and \(d_j\) are the degrees of nodes \(v_i\) and \(v_j\), respectively. SS assigns higher weights to nodes with lower degrees, as these nodes play a more central role in information propagation.

On the other hand, GAT \cite{2017Graph} adopts an attention mechanism based on FS, where the attention coefficient is computed by
\begin{equation}
s_{ij}^{(t)} = \frac{\exp\left(\sigma\left(\mathbf{a}^{(t)} \left[\mathbf{x}_i^{(t)} \mathbf{w}^{(t)} \parallel \mathbf{x}_j^{(t)} \mathbf{w}^{(t)}\right]\right)\right)}{\sum_{v_k \in N(v_i)} \exp\left(\sigma\left(\mathbf{a}^{(t)} \left[\mathbf{x}_i^{(t)} \mathbf{w}^{(t)} \parallel \mathbf{x}_k^{(t)} \mathbf{w}^{(t)}\right]\right)\right)},
\end{equation}
where \(\mathbf{x}_i^{(t)}\) denotes the feature vector of node \(v_i\) at the \(t\)-th layer, and \(\mathbf{a}^{(t)}\) is a learnable attention vector.

\subsection{Two-stage message passing on hypergraphs}
\label{3.3}

HyperSAGE \cite{2020HyperSAGE} is a HGNN model inspired by GraphSAGE \cite{NIPS2017_5dd9db5e}. It employs a two-stage message passing process: first from neighbor nodes to neighbor hyperedges, and then from neighbor hyperedges to the central node. However, the aggregation process in HyperSAGE requires recalculating distinct neighbors for each central node, which leads to inefficient parallelism.

To overcome this limitation, UniGNN \cite{2021UniGNN} proposes a unified framework for message passing on both graphs and hypergraphs:

\begin{subequations}
\begin{align}
\label{eq:v}
& \underbrace{\mathbf{x}_{e} = \phi_1\left(\{\mathbf{x}_j\}_{v_j \in {e}}\right)}_{\text{stage 1: node-level aggregation}},  \\
\label{eq:e}
& \underbrace{\mathbf{x}_i = \phi_2\left(\mathbf{x}_i,\{\mathbf{x}_{e}\}_{{e} \in \mathcal{E}_i}\right)}_{\text{stage 2: hyperedge-level aggregation}},
\end{align}
\end{subequations}
where \(\phi_1\) and \(\phi_2\) represent permutation-invariant functions for node-level and hyperedge-level aggregation, respectively. \(\mathcal{E}_i\) denotes the set of all hyperedges containing node \(v_i\). Equations~\eqref{eq:v} and~\eqref{eq:e} describe the two-stage message passing process: node-level aggregation followed by hyperedge-level aggregation. Figure~\ref{fig1} illustrates an example of this process.

\begin{figure}[t]
  \centering
  \includegraphics[width=0.95\linewidth]{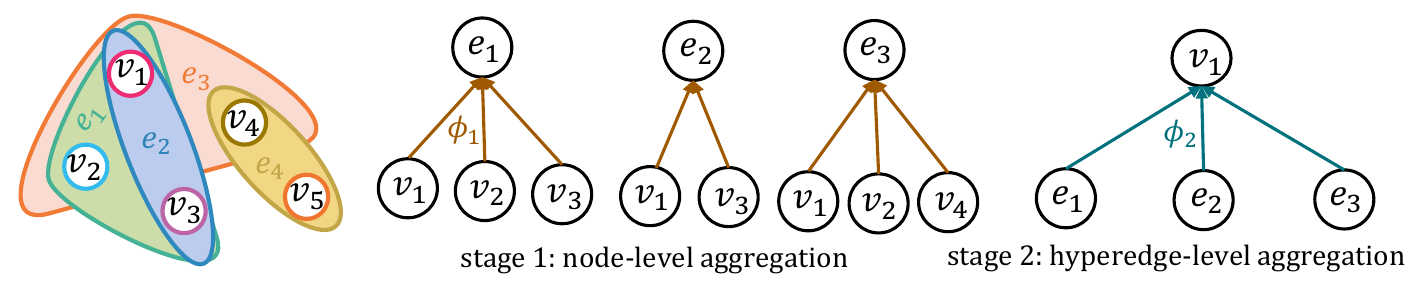}
  \caption{Two-stage message passing process in the hypergraph}
  \label{fig1}
\end{figure}

In UniGNN \cite{2021UniGNN}, self-loops are added to ensure that the central node's own information is preserved during aggregation, preventing it from being overly ``smoothed''. Two classic models are UniGCN and UniGAT. They are identical in the node-level aggregation step at stage 1 but differ in the hyperedge-level aggregation step at stage 2.

\subsection{Meta-learning based on MWN}

Deep neural networks often overfit when trained on biased or noisy datasets, as they treat all samples equally and can be misled by erroneous samples. To address this issue, meta-learning strategies are commonly adopted. In these approaches, the neural network acts as an external model, while an internal model modulates the influence of each sample on the training process. Specifically, the internal model takes the sample loss from the external model as input and generates corresponding sample weights. By assigning higher weights to important samples and lower weights to noisy ones, the internal model reduces the impact of erroneous samples while enhancing the influence of informative ones. This mechanism helps prevent overfitting and improves generalization.

MWN \cite{shu2019meta} is an example of such an internal model. It learns an adaptive sample weighting function using a small, unbiased meta-dataset rather than relying on manually preset weights. Implemented as a Multilayer Perceptron (MLP), MWN maps the sample training losses to corresponding sample weights. Each time the external model (with parameters \(\mathbf{w}\)) is trained and a loss is obtained, this loss is used as input to update the internal model's parameters (\(\theta\)). Based on the updated parameters of the internal model, new sample weights are generated and fed back into the external model for the next training iteration.

\section{The proposed method}\label{s4}

\subsection{Integrating attention mechanisms for HGNNs}
\label{s4.1}

As discussed in Sections~\ref{3.2} and~\ref{3.3}, when GCN and GAT are extended to hypergraphs, they become UniGCN and UniGAT, respectively. In this context, the hyperedge-level aggregation in stage 2 of hypergraphs can also be interpreted as an attention mechanism based on SS and FS. Unlike GCN and GAT, the attention coefficients shift from \(s_{ij}^{(t)}\) (for neighbor nodes in graphs) to \(s_{ie}^{(t)}\) (for neighbor hyperedges in hypergraphs).

After obtaining the hyperedge features \(\mathbf{x}_e\) through node-level aggregation in stage 1, the update rule for node \(v_i\) in stage 2 is given by:

\begin{equation}
\mathbf{x}_i^{(t+1)} = \sigma\left( \mathbf{x}_i^{(t)} \mathbf{w}^{(t)} + \sum_{e \in \mathcal{E}_i} s_{ie}^{(t)} \mathbf{x}_e^{(t)} \mathbf{w}^{(t)} \right),
\end{equation}
where \(s_{ie}^{(t)}\) is the attention coefficient between the central node \(v_i\) and its neighbor hyperedge \(e\). It can be computed using either SS or FS:

\begin{equation}
s_{ie}^{(t)} = \left\{
\begin{array}{ll}
\frac{1}{d_i d_e}, & \text{SS} \\
\frac{\exp\left(\sigma\left(\mathbf{a}^{(t)} \left[\mathbf{x}_i^{(t)} \mathbf{w}^{(t)} \parallel \mathbf{x}_e^{(t)} \mathbf{w}^{(t)}\right]\right)\right)}{\sum_{e \in \mathcal{E}(v_i)} \exp\left(\sigma\left(\mathbf{a}^{(t)} \left[\mathbf{x}_i^{(t)} \mathbf{w}^{(t)} \parallel \mathbf{x}_e^{(t)} \mathbf{w}^{(t)}\right]\right)\right)}, & \text{FS}
\end{array}
\right.
\end{equation}
Here, \(d_e\) denotes the average degree of nodes within hyperedge \(e\), and \(\mathcal{E}(v_i)\) represents the set of all hyperedges containing node \(v_i\).

\begin{figure*}[!ht]
\centering	
\subfigure[CA-Cora]{
\includegraphics[width=.288\textwidth]{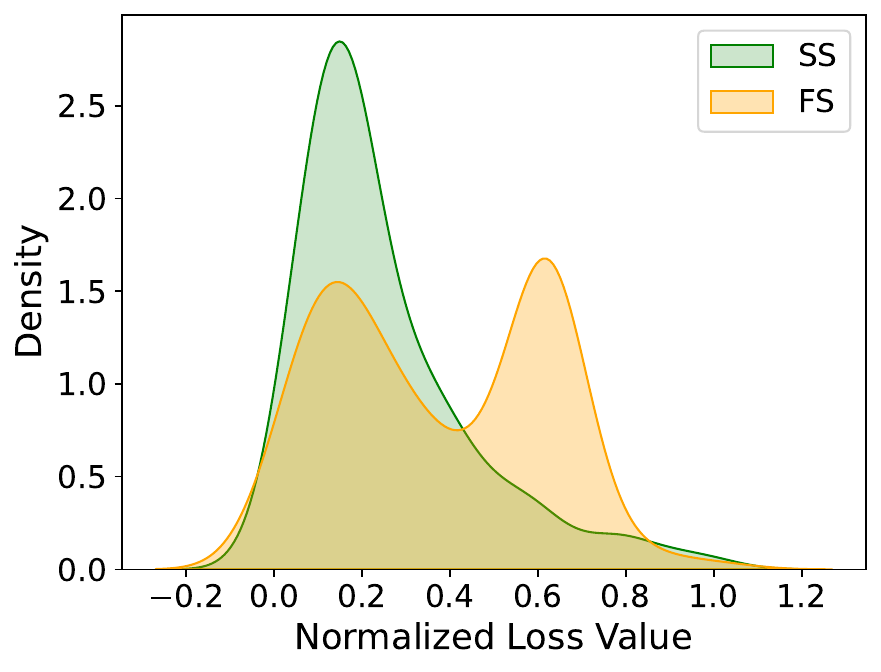}}
\subfigure[Citeseer]{
\includegraphics[width=.288\textwidth]{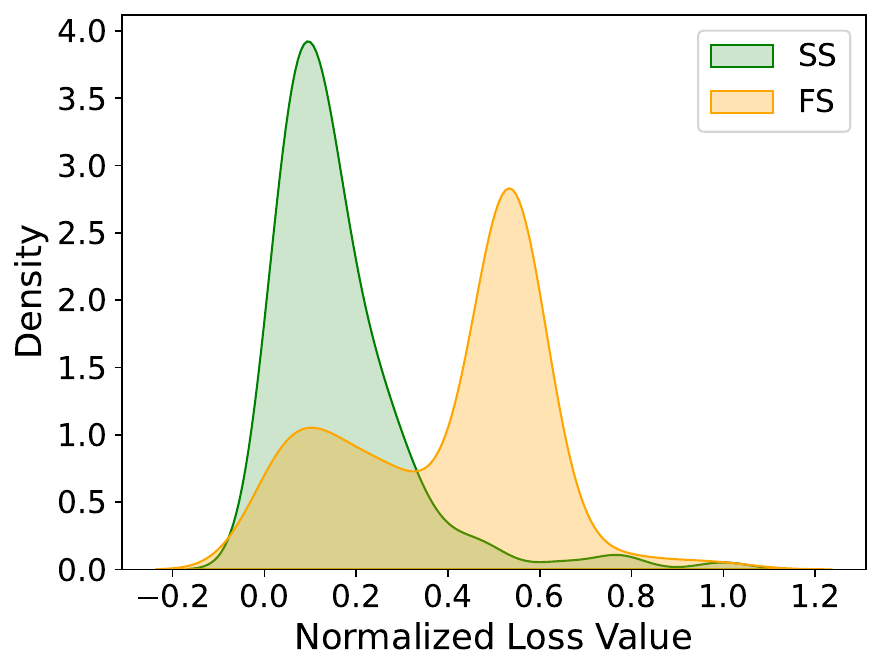}}
\subfigure[20news]{
\includegraphics[width=.28\textwidth]{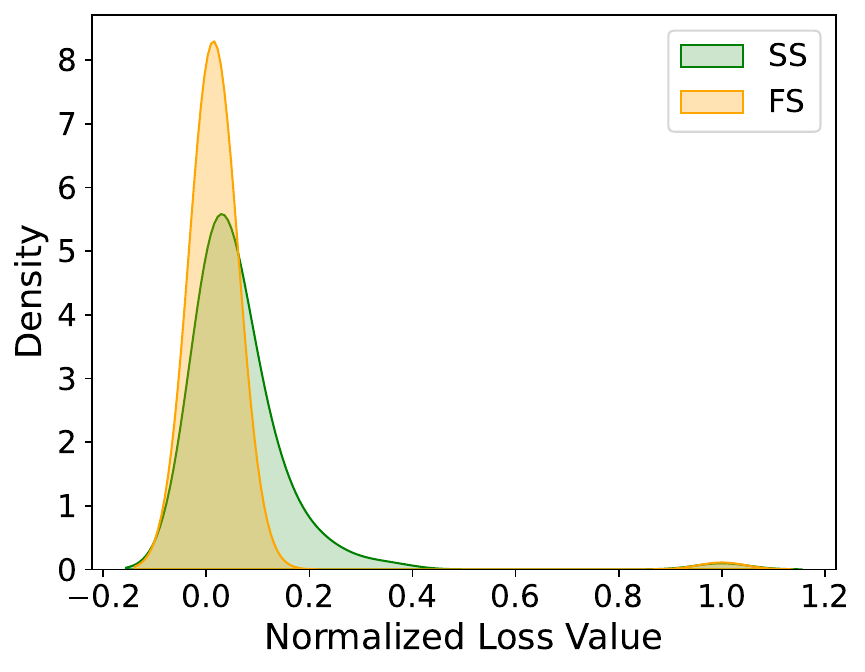}}

\subfigure[Reuters]{
\includegraphics[width=.288\textwidth]{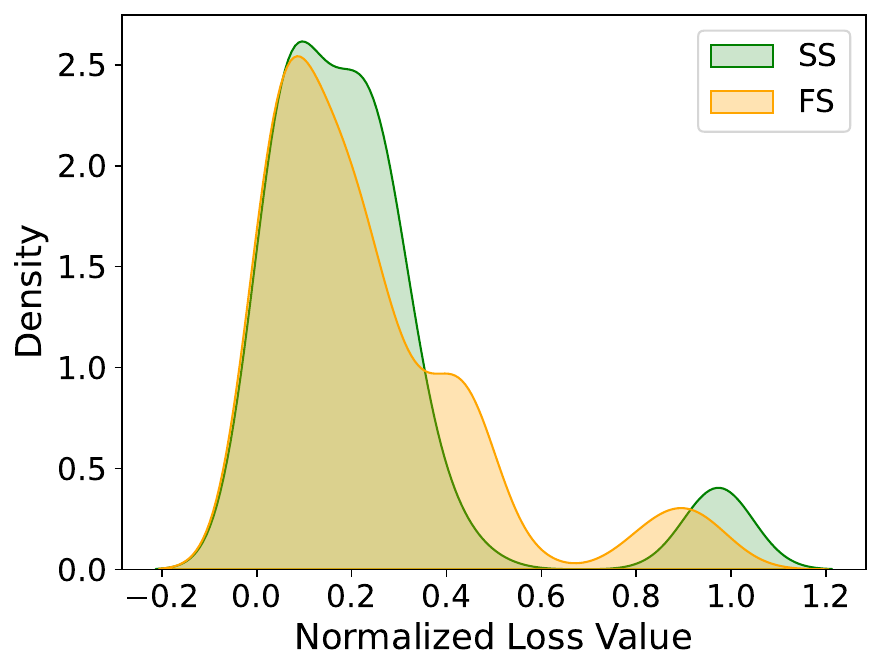}}
\subfigure[ModelNet]{
\includegraphics[width=.288\textwidth]{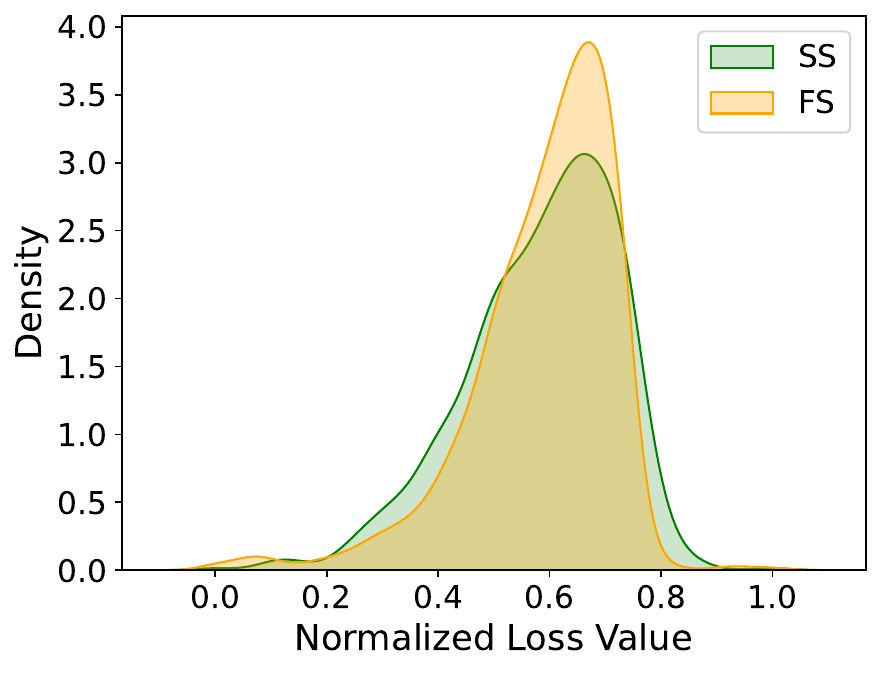}}
\subfigure[Mushroom]{
\includegraphics[width=.288\textwidth]{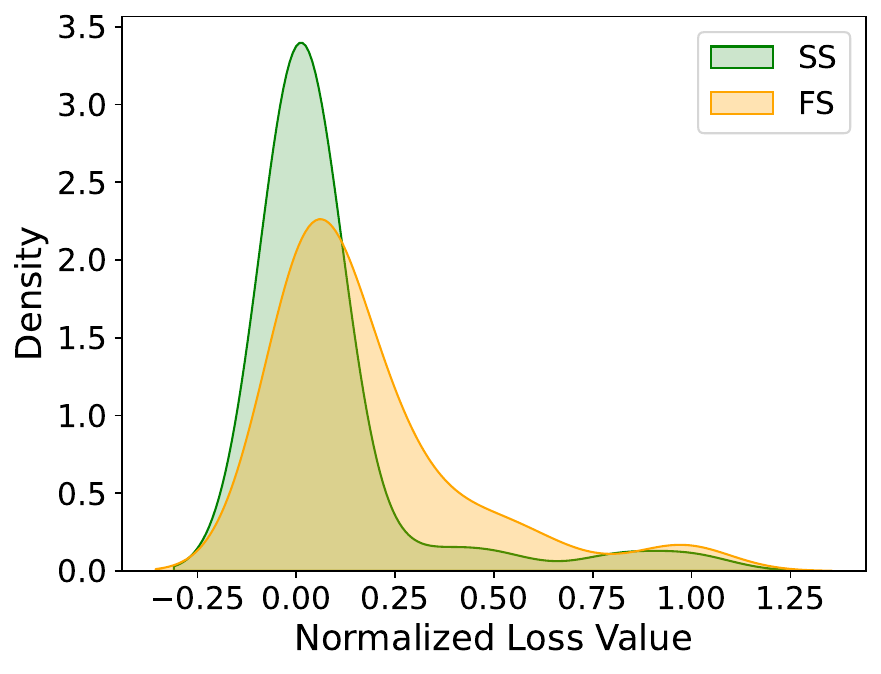}}
	\caption{Kernel density estimation curves of sample losses from two attention mechanisms.}
	\label{fig2}
\end{figure*}

To assess the performance differences between the two hypergraph attention mechanisms based on SS and FS, we plot the Kernel Density Estimation (KDE) curves of their corresponding sample losses, as shown in Figure~\ref{fig2}. 
From this figure, it is evident that there is a significant discrepancy in the sample losses between these two attention mechanisms for the same dataset. This suggests that each mechanism captures different aspects of the data: SS focuses on the structural properties, while FS emphasizes the feature-based relationships. Therefore, combining these two attention mechanisms is essential to fully leverage their complementary strengths. By integrating SS and FS, we aim to achieve a more comprehensive and robust representation of hypergraphs, thereby enhancing the overall learning performance.

Consider the classification task with training set \(\mathcal{D}^{\text{train}} = \left\{\mathbf{x}_i^{\text{train}}, \mathbf{y}_i^{\text{train}}\right\}_{i=1}^n\), where \(\mathbf{x}_i^{\text{train}}\) denotes the feature vector of node \(v_i\), \(\mathbf{y}_i^{\text{train}} \in \{0,1\}^C\) is the one-hot encoded label corresponding to \(v_i\), and \(n\) is the number of training samples. Let \(f(\cdot)\) denote the node classifier of the HGNN model with \(\mathbf{w}\) representing its model parameters. The optimal model parameters \(\mathbf{w}^*\) are obtained by minimizing the following objective function:

\begin{equation}
\mathbf{w}^* = \underset{\mathbf{w}}{\arg \min} \frac{1}{n} \sum_{i=1}^n \ell(f(\mathbf{x}_i^{\text{train}}, \mathbf{H}; \mathbf{w}), \mathbf{y}_i^{\text{train}}),
\end{equation}
where \(\ell(f(\mathbf{x}_i^{\text{train}}, \mathbf{H}; \mathbf{w}), \mathbf{y}_i^{\text{train}})\) denotes the training loss for sample \(v_i\). In this study, we adopt the commonly used cross-entropy (CE) loss \cite{10.1145/1102351.1102422}:
\[
\ell(f(\mathbf{x}_i^{\text{train}}, \mathbf{H}; \mathbf{w}), \mathbf{y}_i^{\text{train}}) = -{\mathbf{y}_i^{\text{train}}}^T \log(f(\mathbf{x}_i^{\text{train}}, \mathbf{H}; \mathbf{w})),
\]
where \(f(\mathbf{x}_i^{\text{train}}, \mathbf{H}; \mathbf{w})\) denotes the output embedding corresponding to \(v_i\).

To integrate the attention mechanisms based on SS and FS, a straightforward approach is to use a weighted linear combination of their losses as the total objective function for the external classifier model. Let \(\alpha_i \in [0,1]\) and \(\beta_i \in [0,1]\) denote the weights assigned to the SS-based and FS-based losses, respectively. The optimal model parameters \(\mathbf{w}^*\) can be obtained by minimizing the following total training loss:

\begin{equation}
\mathbf{w}^*(\theta) = \underset{\mathbf{w}}{\arg \min} \sum_{i=1}^n \alpha_i L_{i,\text{train}}^1(\mathbf{w}) + \beta_i L_{i,\text{train}}^2(\mathbf{w}),
\end{equation}
where \(L_{i,\text{train}}^1(\mathbf{w}) = \ell(f_{1}(\mathbf{x}_i^{\text{train}}, \mathbf{H}; \mathbf{w}), \mathbf{y}_i^{\text{train}})\) is the loss from the SS-based attention, and \(L_{i,\text{train}}^2(\mathbf{w}) = \ell(f_{2}(\mathbf{x}_i^{\text{train}}, \mathbf{H}; \mathbf{w}), \mathbf{y}_i^{\text{train}})\) is the loss from the FS-based attention.

However, setting \(\alpha_i = \beta_i\) is often suboptimal, as it assumes equal contributions from both attention mechanisms. This approach may lead to one mechanism being overemphasized, failing to adapt to heterogeneous data with attention bias. To address this, we introduce meta-learning to dynamically optimize the balance between the SS-based and FS-based losses.

\subsection{Overlap-aware meta-learning attention}

In this work, we adopt the same architecture as MWN \cite{shu2019meta} for the internal model that generates the sample weights. Specifically, the internal model is implemented as a Multilayer Perceptron (MLP), denoted by \( V\left(L_{i,\text{train}}^1(\mathbf{w}), L_{i,\text{train}}^2(\mathbf{w}); \theta\right) \), which takes the sample training losses as input and outputs sample weights. The network parameters of this internal model are denoted by \(\theta\), and the output is activated by the Sigmoid function.

While the external classifier model is trained on the training set \(\mathcal{D}^{\text{train}}\), the parameters \(\theta\) of the internal model are learned on a validation set \(\mathcal{D}^{\text{meta}}\). This separation allows us to capture the distinct characteristics of training and validation samples. Specifically, the differences between SS and FS are more pronounced in the training samples, whereas these differences are relatively smaller in the validation samples. The validation set \(\mathcal{D}^{\text{meta}} = \left\{\mathbf{x}_i^{\text{meta}}, \mathbf{y}_i^{\text{meta}}\right\}_{i=1}^m\) (where \(m \ll n\)) serves as a balanced subset with unbiased attention mechanisms. The optimal parameters \(\theta^*\) can be obtained by solving the following bi-level optimization problem:

\vspace{-10pt}
\begin{subequations} \label{eq3}
\begin{equation}
\label{eq3_1}
\begin{split}
&\,\,\textbf{\small{Internal Model:}}  \\[-4pt]
&\,\,\mathbf{\theta}^*=\underset{\theta}{\arg \min }  \frac{1}{m} \sum_{i=1}^m \left(L_{i,\text{meta}}^1(\mathbf{w}^*(\mathbf{\theta}))+L_{i,\text{meta}}^2(\mathbf{w}^*(\mathbf{\theta}))\right),
\end{split}
\end{equation}
\vspace{-5pt}
\begin{equation}
\label{eq3_2}
\begin{split}
&\textbf{\small{External Model:}} \\[-4pt]
&\text{s.t.} \quad \mathbf{w}^*(\theta) = \underset{\mathbf{w}}{\arg \min } \sum_{i=1}^n \alpha_i L_{i,\text{train}}^1(\mathbf{w}) + \beta_i L_{i,\text{train}}^2(\mathbf{w}) \\
& \quad\quad\,\, \alpha_i, \beta_i = V\left(L_{i,\text{train}}^1(\mathbf{w}),L_{i,\text{train}}^2(\mathbf{w}); \theta\right) \hphantom{\sum_{i=1}^n}
\end{split}
\end{equation}
\end{subequations}
\vspace{2pt}
where
$
L_{i,\text{meta}}^{1}(\mathbf{w}^*(\mathbf{\theta}))=\ell_{1}(f(\mathbf{x}_i^{\text{meta}},\mathbf{H}; \mathbf{w}^*(\mathbf{\theta})), \mathbf{y}_i^{\text{meta}})
$
and
$
L_{i,\text{meta}}^{2}(\mathbf{w}^*(\mathbf{\theta}))=\ell_{2}(f(\mathbf{x}_i^{\text{meta}},\mathbf{H}; \mathbf{w}^*(\mathbf{\theta})), \mathbf{y}_i^{\text{meta}})
$
are the validation losses from hypergraph attentions based on SS and FS for a validation sample \(v_i\), respectively.

As discussed in Section~\ref{s4.1}, these attention mechanisms operate at stage 2. Before this stage, hyperedge features are derived from node-level aggregation at stage 1, as described in Equation~\eqref{eq:v}. During this process, the overlap levels of neighboring nodes significantly influence the intensity of information propagation. For instance, higher overlap levels may lead to redundant information propagation, increasing the likelihood of over-smoothing. Conversely, lower overlap levels may enhance information discrimination but are more susceptible to noise. Traditional single-task learning frameworks train a unified set of parameters for all nodes, which limits the model's generalization ability when handling diverse overlap biases. To overcome this challenge, we partition nodes into distinct tasks based on their overlap levels and assign task-specific parameter spaces to each task.

To quantify the overlap level of a node, we employ a principled measure: the overlapness \( p_i \) of node \( v_i \)'s egonet \( E\{v_i\} \) (see Definition 2 in \cite{Lee2021}). This metric indicates the neighbor overlap level of node \( v_i \) and is defined as follows:

\begin{equation}
p_i := \frac{\sum_{e \in E\{v_i\}} |e|}{\left| \bigcup_{e \in E\{v_i\}} e \right|}.
\label{eq:p}
\end{equation}

Here, the numerator \(\sum_{e \in E\{v_i\}} |e|\) represents the sum of the sizes of all hyperedges within the egonet \( E\{v_i\} \) of node \( v_i \). The denominator \(\left| \bigcup_{e \in E\{v_i\}} e \right|\) denotes the size of the union of all nodes in the hyperedges of \( v_i \)'s egonet. Figure~\ref{fig03} provides an example of a hypergraph to illustrate Equation~(\ref{eq:p}).

\begin{figure}[t]
  \centering
  \includegraphics[width=0.25\linewidth]{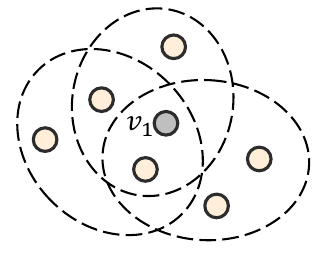}\\
  \caption{An example of a hypergraph,  where \(v_1\)'s   overlap level is $p_1=\frac{4+4+4}{7}=1.714$. }
  \label{fig03}
\end{figure}

This metric satisfies three fundamental axioms: it simultaneously accounts for the number of hyperedges, the number of distinct nodes, and the sizes of hyperedges. This comprehensive characterization distinguishes it from other commonly used metrics, such as Intersection, Union Inverse, Jaccard Index, Overlap Coefficient, and Density, none of which can concurrently satisfy all three axioms \cite{Lee2021}. Consequently, ratio \( p_i \) effectively characterizes the overlap level of \( v_i \). A higher \( p_i \) value indicates a more densely connected neighborhood structure, suggesting stronger relationships between \( v_i \) and its neighbors. In contrast, a lower \( p_i \) value implies a more dispersed neighborhood, which may indicate weaker relationships between \( v_i \) and its neighbors.

To identify the overlap levels of nodes, we introduce an overlap vector \( \mathbf{p} = [p_1, p_2, \dots, p_n]^T \) for all training nodes. By inputting \( \mathbf{p} \) into a parameterized partitioning function \(\mathcal{C}(\mathbf{p}; \Omega)\), nodes are categorized into \( K \) distinct levels, each assigned a task label \( c \in \{0, 1\}^K \). In this work, we apply the \( K \)-means clustering algorithm with \( K = 3 \) to cluster nodes into low, medium, and high overlap levels. Subsequently, we employ a multi-task learning strategy to assign task-specific weights based on varying overlap levels, thereby enhancing the model's generalization capability.

Specifically, we construct an MT-MWN:
\begin{equation}
\label{MWN}
\mathcal{V}(L_{i,\text{train}}^1(\mathbf{w}), L_{i,\text{train}}^2(\mathbf{w}), p_i; \Theta, \Omega),
\end{equation}
which adaptively learns sample weights. The shared layer extracts common features across all tasks to improve generalization, while the task-specific layer learns unique weights \(\{\alpha_i, \beta_i\}\) for each sample based on its task label \( c \). This design ensures that each overlap level (low, medium, high) has an independent weight generation module with unique parameters, as illustrated in Figure~\ref{fig3}. This strategy generates weights tailored to each overlap level, enhancing the model's flexibility and generalization.

\begin{figure}[tb]
  \centering
  \includegraphics[width=0.68\linewidth]{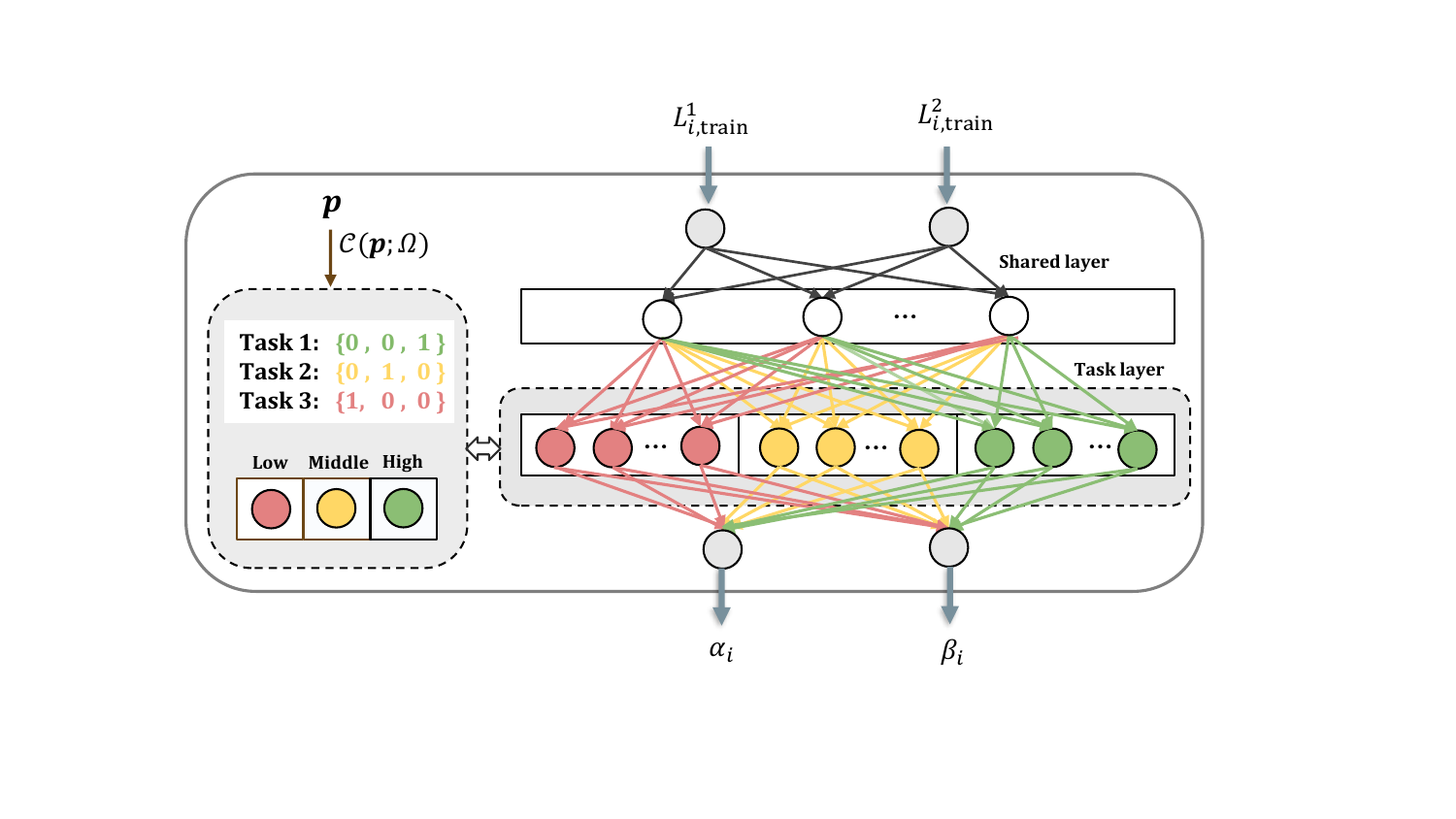}
  \caption{The architecture of MT-MWN, i.e., the internal model.}
  \label{fig3}
\end{figure}

During optimization, the MT-MWN  (\ref{MWN}) requires tuning of two parameter sets: \(\Theta\) and \(\Omega\). Here, \(\Theta\) includes all trainable parameters, encompassing both the shared layer and task-specific layer parameters. \(\Omega\), derived from \( K \)-means clustering, represents the centroids for each overlap level. Since initializing \(\Omega\) involves solving an integer programming problem, we adopt a two-stage approach: first, we apply \( K \)-means clustering to establish the initial centroids for overlap levels, denoted as \(\Omega^* = \{\mu_k\}_{k=1}^3\); then, we optimize \(\Theta\) to further enhance model performance. Given that \(\Omega^*\) is predetermined and \(p_i\) is provided, we directly denote the MT-MWN as
\begin{equation}
\mathcal{V}(L_{i,\text{train}}^1(\mathbf{w}), L_{i,\text{train}}^2(\mathbf{w}); \Theta).
\end{equation}

Now, the objective function of the model in Equation~(\ref{eq3}) can be reformulated as the following bi-level optimization problem:

\vspace{-14pt}
\begin{subequations} \label{eq4}
\begin{equation}
\label{eq4_1}
\begin{split}
&\,\,\,\,\,\quad\quad\,\quad\,\, \textbf{\small{Internal Model:}}  \\[-4pt]
&\,\,\,\,\,\quad\quad\,\quad\,\, \Theta^*  =\underset{\Theta, \Omega}{\arg \min }  \frac{1}{m} \sum_{i=1}^m L_{i,\text{meta}}^1\left(\mathbf{w}^*(\Theta)\right)+L_{i,\text{meta}}^2\left(\mathbf{w}^*(\Theta )\right)
\end{split}
\end{equation}
\vspace{-5pt}
\begin{equation}
\label{eq4_2}
\begin{split}
&\textbf{\small{External Model:}} \\[-4pt]
&\mathbf{w}^*(\Theta) =\underset{\mathbf{w}}{\arg \min } \sum_{i=1}^n \alpha_i L_{i,\text{train}}^1(\mathbf{w}) + \beta_i L_{i,\text{train}}^2(\mathbf{w}) \\
& \quad\quad \text { s.t. }\quad
  \alpha_i, \beta_i =\mathcal{V}\left(L_{i,\text{train}}^1(\mathbf{w}),L_{i,\text{train}}^2(\mathbf{w}); \Theta\right)
\end{split}
\end{equation}
\end{subequations}

\subsection{Optimization procedure}

Our objective is to solve the bi-level optimization problem defined by Eqs.~(\ref{eq4_1}) and~(\ref{eq4_2}) in order to obtain the optimal values for \(\Theta^*\) and \(\mathbf{w}^*\). It is worth noting that finding exact solutions for these equations requires solving for the optimal \(\mathbf{w}^*\) every time \(\Theta\) is updated.
To achieve this, we proceed with the following steps:
Step 1 computes an intermediate state of the external model parameters \(\hat{\mathbf{w}}^{(t)}(\Theta)\);
Step 2 updates the internal model parameters as \(\Theta^{(t+1)}\);
Step 3  updates the external model parameters as \(\mathbf{w}^{(t+1)}\).

\vspace{0.25cm}

\textbf{Step 1: intermediate state of external model parameters}

We first compute an intermediate state \(\hat{\mathbf{w}}^{(t)}(\Theta)\) of the external model parameters. At each iteration, we update the current parameters \(\mathbf{w}^{(t)}\) in the direction of the gradient of the objective function. We apply gradient descent on a mini-batch training set \(\mathcal{D}^{\text{train}} = \{\mathbf{x}_i^{\text{train}}, \mathbf{y}_i^{\text{train}}\}_{i=1}^n\), and the parameter is computed by:

\begin{equation}
\label{eq8}
\begin{aligned}
\hat{\mathbf{w}}^{(t)}(\Theta) &= \mathbf{w}^{(t)} - \lambda_{1} \sum_{i=1}^n \left(\alpha_i^{(t)} \nabla_{\mathbf{w}} L_{i,\text{train}}^1(\mathbf{w}^{(t)}) + \beta_i^{(t)} \nabla_{\mathbf{w}} L_{i,\text{train}}^2(\mathbf{w}^{(t)})\right) \Big|_{\mathbf{w}^{(t)}}.
\end{aligned}
\end{equation}

Here, \(\alpha_i^{(t)}\) and \(\beta_i^{(t)}\) are coefficients that depend on the loss functions \(L_{i,\text{train}}^1\) and \(L_{i,\text{train}}^2\) evaluated at the current internal model parameters \(\Theta^{(t)}\). \(\lambda_1\) is the learning rate for the external model, and \(\hat{\mathbf{w}}^{(t)}(\Theta)\) represents the intermediate external model parameters. This step provides an intermediate calculation to guide the subsequent updates to the internal model parameters \(\Theta\).

\vspace{0.25cm}

\textbf{Step 2: update of internal model parameters}

After computing the intermediate external model parameters \(\hat{\mathbf{w}}^{(t)}(\Theta)\) based on Eq.~(\ref{eq8}), we update the internal model parameters \(\Theta\) by applying gradient descent on the meta-objective defined by Eq.~(\ref{eq4_1}). This update is performed on the validation set \(\mathcal{D}^{\text{meta}} = \{\mathbf{x}_i^{\text{meta}}, \mathbf{y}_i^{\text{meta}}\}_{i=1}^m\), and the update rule is as follows:

\begin{equation}
\label{eq9}
\begin{aligned}
\Theta^{(t+1)} &= \Theta^{(t)} - \lambda_2 \frac{1}{m} \sum_{i=1}^m \left( \nabla_{\Theta} L_{i,\text{meta}}^1(\hat{\mathbf{w}}^{(t)}(\Theta)) + \nabla_{\Theta} L_{i,\text{meta}}^2(\hat{\mathbf{w}}^{(t)}(\Theta)) \right) \Big|_{\Theta^{(t)}}.
\end{aligned}
\end{equation}

Here, \(\lambda_2\) is the learning rate for the internal model, and the gradients are computed with respect to \(\Theta^{(t)}\), using the intermediate external model parameters \(\hat{\mathbf{w}}^{(t)}(\Theta)\) calculated in Step 1. This step ensures that the internal model learns in a manner consistent with the external model's state.

\vspace{0.25cm}

\textbf{Step 3: update of external model parameters}

Once the internal model parameters \(\Theta^{(t+1)}\) have been updated, we update the external model parameters \(\mathbf{w}\) by performing a gradient descent step. This step fine-tunes \(\mathbf{w}\) to the updated internal model. The update is performed using the updated \(\Theta^{(t+1)}\) calculated in Step 2, and the rule is as follows:

\begin{equation}
\label{eq10}
\begin{aligned}
\mathbf{w}^{(t+1)} &= \mathbf{w}^{(t)} - \lambda_{1} \sum_{i=1}^n \left(\alpha_i^{(t+1)} \nabla_{\mathbf{w}} L_{i,\text{train}}^1(\mathbf{w}^{(t)}) + \beta_i^{(t+1)} \nabla_{\mathbf{w}} L_{i,\text{train}}^2(\mathbf{w}^{(t)})\right) \Big|_{\mathbf{w}^{(t)}}.
\end{aligned}
\end{equation}

Here, \(\alpha_i^{(t+1)}\) and \(\beta_i^{(t+1)}\) are updated based on the new internal model parameters \(\Theta^{(t+1)}\). This step completes the update for the external model's parameters, ensuring they are aligned with the updated internal model.

\begin{figure*}[htbp]
  \centering
  \includegraphics[width=0.99\linewidth]{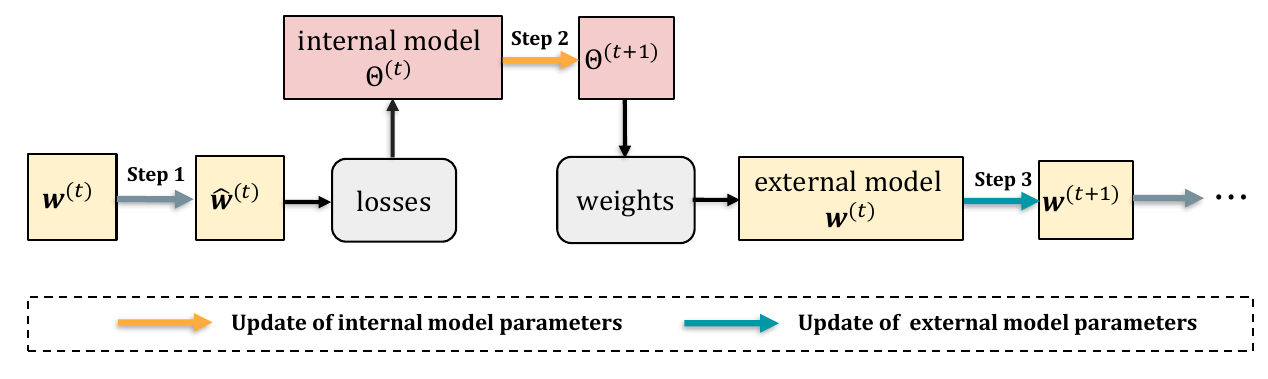}\\
  \caption{Schematic diagram of OMA-HGNN. The internal model is the  overlap-aware Meta-Weight Net, and the external model is the attention-fused HGNN classifier. }
  \label{fig4}
\end{figure*}

\begin{algorithm}
\caption{Iterative algorithm of OMA-HGNN}
	\label{alg1}
	\begin{algorithmic}[1]  
		\Require Training set $\mathcal{D}^{\text{train}}$, validation set
$\mathcal{D}^{\text{meta}}$, train size $n$, validation size $m$, max iterations $T$, incidence matrix  $\mathbf{H}$.

		\Ensure   External and internal model parameters $\mathbf{w}^{*}$, $\Theta^{*}$.
        \State Compute overlap $\mathbf{p}$ on $\mathcal{D}^{\text{train}}$ by Eq.(\ref{eq:p}).
        \State Perform $K$-means on $\mathbf{p}$ to obtain  $\Omega^*=\left\{\mu_k\right\}_{k=1}^K$.
        \State Initialize external and internal model parameters $\mathbf{w}^{(1)}$, $\Theta^{(1)}$.
        \For{each iteration $t\in [1,T]$}
        \State $\alpha_i^{(t)}, \beta_i^{(t)} =\mathcal{V}(L_{i,\text{train}}^1(\mathbf{w}^{(t)}),L_{i,\text{train}}^2(\mathbf{w}^{(t)}); \Theta^{(t)})$.
        \State  Calculate \(\hat{\mathbf{w}}^{(t)}(\Theta)\) by Eq.(\ref{eq8}) as an intermediate state.
        \State Update parameter $\Theta^{(t+1)}$ of internal model by Eq.(\ref{eq9}).
        \State $\alpha_i^{(t+1)}, \beta_i^{(t+1)} =\mathcal{V}(L_{i,\text{train}}^1(\mathbf{w}^{(t)}),L_{i,\text{train}}^2(\mathbf{w}^{(t)}); \Theta^{(t+1)})$.
        \State Update parameter $\mathbf{w}^{(t+1)}$ of external model by Eq.(\ref{eq10}).
        \EndFor
	\end{algorithmic}
\end{algorithm}

Figure~\ref{fig4} illustrates the schematic diagram of OMA-HGNN. The iterative algorithm of OMA-HGNN is summarized in Algorithm~\ref{alg1}.
It is evident that both the external model and internal model undergo gradual improvements in their parameters throughout the learning process. This iterative approach enables stable and consistent updates of the weights, ensuring that the models adapt effectively to the data.


\subsection{Convergence Analysis}

We perform a convergence analysis of Algorithm~\ref{alg1}. We theoretically prove that the proposed method converges to critical points of both the meta loss (\ref{eq4_1}) and the training loss (\ref{eq4_2}) under certain conditions. These results are presented in Theorems~\ref{th1} and~\ref{th2}, respectively. The detailed proofs are provided in the supplementary material (Appendix A).

We denote the overall meta loss as
\begin{equation}
\label{eq:ml}
\mathcal{L}^{\text{meta}}(\mathbf{w}^*(\Theta)) = \frac{1}{m} \sum_{i=1}^m \left( L_{i,\text{meta}}^1(\mathbf{w}^*(\Theta)) + L_{i,\text{meta}}^2(\mathbf{w}^*(\Theta)) \right),
\end{equation}
where \(\mathbf{w}^*\) is the parameters of the external model, and \(\Theta\) is the parameters of the internal model. Additionally, we denote the overall training loss as
\begin{equation}
\label{eq18}
\mathcal{L}^{\text{train}}(\mathbf{w}; \Theta) = \sum_{i=1}^n \left( \alpha_i L_{i,\text{train}}^1(\mathbf{w}) + \beta_i L_{i,\text{train}}^2(\mathbf{w}) \right),
\end{equation}
where \(\alpha_i, \beta_i = \mathcal{V}(L_{i,\text{train}}^1(\mathbf{w}), L_{i,\text{train}}^2(\mathbf{w}); \Theta)\).

\begin{theorem}\label{th1}
Suppose the loss functions \(\ell_1\) and \(\ell_2\) are Lipschitz smooth with constant \(M\), and \(\mathcal{V}(\cdot, \cdot; \Theta)\) is differentiable with a \(\delta\)-bounded gradient and twice differentiable with its Hessian bounded by \(\mathcal{B}\). Additionally, assume that the gradients of \(\ell_1\) and \(\ell_2\) are \(\rho\)-bounded with respect to the training/validation sets. Let the learning rates \(\lambda_1^{(t)}, \lambda_2^{(t)}\) for \(1 \leq t \leq T\) be monotonically decreasing sequences satisfying:
\(
\lambda_1^{(t)} = \min\left\{\frac{1}{M}, \frac{c_1}{\sqrt{T}}\right\}, \quad \lambda_2^{(t)} = \min\left\{\frac{1}{M}, \frac{c_2}{\sqrt{T}}\right\},
\)
for some \(c_1, c_2 > 0\) such that \(\frac{\sqrt{T}}{c_1} \geq M\) and \(\frac{\sqrt{T}}{c_2} \geq M\). Additionally, these sequences satisfy:
\(
\sum_{t=1}^{\infty} \lambda_1^{(t)} = \infty, \quad \sum_{t=1}^{\infty} (\lambda_1^{(t)})^2 < \infty,
\)
\(
\sum_{t=1}^{\infty} \lambda_2^{(t)} = \infty, \quad \sum_{t=1}^{\infty} (\lambda_2^{(t)})^2 < \infty.
\)
Under these conditions, the internal model can achieve
\[
\mathbb{E}\left[\left\|\nabla \mathcal{L}^{\text{meta}}\left(\hat{\mathbf{w}}^{(t)}(\Theta^{(t)})\right)\right\|_2^2\right] \leq \epsilon
\]
in \(\mathcal{O}(1/\epsilon^2)\) steps. More specifically,
\begin{equation}
\label{eq25}
\min_{0 \leq t \leq T} \mathbb{E}\left[\left\|\nabla \mathcal{L}^{\text{meta}}\left(\hat{\mathbf{w}}^{(t)}(\Theta^{(t)})\right)\right\|_2^2\right] \leq \mathcal{O}\left(\frac{C}{\sqrt{T}}\right),
\end{equation}
where \(C\) is a constant independent of the convergence process. (Proof: see Appendix A.1.)
\end{theorem}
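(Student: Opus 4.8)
The plan is to recast the statement as a standard stochastic non-convex convergence result for the internal-model update in Step~2 (Eq.~\eqref{eq9}), viewing the meta loss as a composite function of $\Theta$ through the intermediate weights $\hat{\mathbf{w}}^{(t)}(\Theta)$ of Eq.~\eqref{eq8}, and then running the usual ``smoothness $+$ descent lemma $+$ telescoping'' template for SGD on a smooth objective. Concretely, for a fixed base point $\mathbf{w}^{(t)}$ I would set $g_t(\Theta) := \mathcal{L}^{\text{meta}}(\hat{\mathbf{w}}^{(t)}(\Theta))$ and analyze the stochastic step $\Theta^{(t+1)} = \Theta^{(t)} - \lambda_2^{(t)} \widehat{\nabla}_{\Theta} g_t(\Theta^{(t)})$, where $\widehat{\nabla}_{\Theta}$ denotes the mini-batch meta-gradient. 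The goal is a per-step inequality showing that the meta loss decreases in expectation by an amount proportional to $-\lambda_2^{(t)}\|\nabla_{\Theta} g_t(\Theta^{(t)})\|_2^2$, up to higher-order residuals, after which averaging over $t=1,\dots,T$ against a lower bound on $\mathcal{L}^{\text{meta}}$ delivers the $\mathcal{O}(1/\sqrt{T})$ rate.

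The first and most technical ingredient is a smoothness lemma for $g_t$. By the chain rule the meta-gradient factors as
\[
\nabla_{\Theta} g_t(\Theta) = \left( \frac{\partial \hat{\mathbf{w}}^{(t)}(\Theta)}{\partial \Theta} \right)^{\!\top} \nabla_{\mathbf{w}} \mathcal{L}^{\text{meta}}\!\left( \hat{\mathbf{w}}^{(t)}(\Theta) \right),
\]
and from Eq.~\eqref{eq8} the Jacobian $\partial \hat{\mathbf{w}}^{(t)}/\partial \Theta = -\lambda_1^{(t)} \sum_i \big( \nabla_{\Theta}\alpha_i\, \nabla_{\mathbf{w}} L_{i,\text{train}}^1 + \nabla_{\Theta}\beta_i\, \nabla_{\mathbf{w}} L_{i,\text{train}}^2 \big)^{\!\top}$, where $\alpha_i,\beta_i$ are the outputs of $\mathcal{V}(\cdot,\cdot;\Theta)$. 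I would bound this Jacobian in norm using the $\delta$-bounded gradient of $\mathcal{V}$ (to control $\nabla_{\Theta}\alpha_i,\nabla_{\Theta}\beta_i$) together with the $\rho$-bounded loss gradients, and bound its Lipschitz variation in $\Theta$ using the $\mathcal{B}$-bounded Hessian of $\mathcal{V}$ and the $M$-Lipschitz smoothness of $\ell_1,\ell_2$. Combined with the boundedness of $\nabla_{\mathbf{w}}\mathcal{L}^{\text{meta}}$, this yields that $g_t$ is $L$-smooth in $\Theta$ with an explicit constant $L = L(M,\delta,\mathcal{B},\rho,\lambda_1^{(t)})$.

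With smoothness established, I would invoke the descent inequality $g_t(\Theta^{(t+1)}) \le g_t(\Theta^{(t)}) + \langle \nabla_{\Theta} g_t(\Theta^{(t)}), \Theta^{(t+1)}-\Theta^{(t)} \rangle + \tfrac{L}{2}\|\Theta^{(t+1)}-\Theta^{(t)}\|_2^2$, substitute the update, and take the conditional expectation over the mini-batch. Unbiasedness of $\widehat{\nabla}_{\Theta} g_t$ produces the $-\lambda_2^{(t)}\|\nabla_{\Theta} g_t(\Theta^{(t)})\|_2^2$ term, while the bounded-variance consequence of the $\rho$-bound contributes an $\mathcal{O}(L(\lambda_2^{(t)})^2)$ residual; the constraint $\lambda_2^{(t)}\le 1/M$ keeps the quadratic term dominated by the linear one. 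A separate drift estimate must account for Step~3 moving the base point from $\mathbf{w}^{(t)}$ to $\mathbf{w}^{(t+1)}$, which turns $g_t$ into $g_{t+1}$: since $\alpha_i,\beta_i\in[0,1]$ and the loss gradients are $\rho$-bounded, $\|\mathbf{w}^{(t+1)}-\mathbf{w}^{(t)}\| = \mathcal{O}(\lambda_1^{(t)})$, and Lipschitz continuity of $\mathcal{L}^{\text{meta}}$ controls the gap $g_{t+1}(\Theta^{(t+1)}) - g_t(\Theta^{(t+1)})$ in terms of the schedule. Telescoping the resulting one-step bound over $t$, using that $\mathcal{L}^{\text{meta}}$ is bounded below and that $\sum_t (\lambda^{(t)})^2 < \infty$ makes the residual terms summable, and finally dividing by $\sum_t \lambda_2^{(t)}$ and substituting $\lambda_1^{(t)},\lambda_2^{(t)} = \min\{1/M,\,c/\sqrt{T}\}$ bounds $\min_{0\le t\le T}\mathbb{E}\|\nabla\mathcal{L}^{\text{meta}}\|_2^2$ by $\mathcal{O}(C/\sqrt{T})$, which is exactly the stated $\mathcal{O}(1/\epsilon^2)$ iteration complexity.

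\textbf{The main obstacle} I anticipate is the coupled base-point drift: because the two updates share iterates, the change $\mathbf{w}^{(t)}\to\mathbf{w}^{(t+1)}$ enters the telescoping of the $\Theta$-descent, and one must show that this perturbation contributes only a term that remains summable (or negligible) against the $\lambda_2$-weighted decrease, rather than an $\mathcal{O}(1)$ contribution that would spoil the rate. This is where the interplay of the Lipschitz smoothness, the $\rho$-bound, and the matched decreasing schedules for $\lambda_1^{(t)}$ and $\lambda_2^{(t)}$ must be used delicately. The secondary difficulty is purely bookkeeping: establishing the smoothness constant $L$, whose dependence on the training-loss Hessian enters through $\partial\hat{\mathbf{w}}^{(t)}/\partial\Theta$ and must be tracked together with the $\mathcal{B}$-bounded Hessian of $\mathcal{V}$.
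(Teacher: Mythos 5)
Your proposal follows essentially the same route as the paper's Appendix A.1: a Lipschitz-smoothness lemma for the composite map $\Theta \mapsto \mathcal{L}^{\text{meta}}(\hat{\mathbf{w}}(\Theta))$ obtained via the chain rule from the $\delta$-bounded gradient and $\mathcal{B}$-bounded Hessian of $\mathcal{V}$ (the paper's Lemma~1), followed by the two-term decomposition separating the $\Theta$-descent step from the base-point drift $\hat{\mathbf{w}}^{(t)} \to \hat{\mathbf{w}}^{(t+1)}$ (the paper's Eq.~(A.12)), then the descent inequality, telescoping, and division by $\sum_t \lambda_2^{(t)}$. The ``main obstacle'' you flag --- that the drift contributes a term of order $\lambda_1^{(t)}$ per step, which is not square-summable --- is exactly the point where the paper also has to invoke an additional summability condition on $\sum_t \lambda_1^{(t)}\|\mathcal{B}^{(t)}\|_2$ rather than deriving it, so your plan is faithful to the published argument, including its weakest link.
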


\begin{theorem}\label{th2}
Under the conditions of Theorem~\ref{th1}, the external model can achieve
\[
\mathbb{E}\left[\left\|\nabla \mathcal{L}^{\text{train}}\left(\hat{\mathbf{w}}^{(t)}; \Theta^{(t)}\right)\right\|_2^2\right] \leq \epsilon
\]
in \(\mathcal{O}(1/\epsilon^2)\) steps. More specifically,
\begin{equation}
\label{eq33}
\min_{0 \leq t \leq T} \mathbb{E}\left[\left\|\nabla \mathcal{L}^{\text{train}}\left(\hat{\mathbf{w}}^{(t)}; \Theta^{(t)}\right)\right\|_2^2\right] \leq \mathcal{O}\left(\frac{C}{\sqrt{T}}\right),
\end{equation}
where \(C\) is a constant independent of the convergence process. (Proof: see Appendix A.2.)
\end{theorem}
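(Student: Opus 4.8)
The plan is to treat the external update Step~3 (Eq.~\ref{eq10}) as an approximate stochastic gradient descent step on the training objective $\mathcal{L}^{\text{train}}(\cdot;\Theta)$ and to run a standard smoothness-based descent argument, the essential extra work being to control the drift caused by the simultaneously moving internal parameter $\Theta$. First I would record the structural facts inherited from the hypotheses of Theorem~\ref{th1}: because $\ell_1,\ell_2$ are $M$-Lipschitz smooth and the outputs $\alpha_i,\beta_i=\mathcal{V}(\cdot;\Theta)\in(0,1)$ are Sigmoid-bounded, the map $\mathbf{w}\mapsto\mathcal{L}^{\text{train}}(\mathbf{w};\Theta)$ is $L$-Lipschitz smooth for a constant $L$ depending on $M$ and $n$, and is bounded below by $0$. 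I would also note that both $\mathcal{L}^{\text{train}}(\mathbf{w};\cdot)$ and its $\mathbf{w}$-gradient are Lipschitz in $\Theta$, with modulus controlled by the $\delta$-bounded gradient of $\mathcal{V}$ and the $\rho$-bounded loss gradients.

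The core computation is a per-iteration descent inequality. Writing $\tilde{g}^{(t)}=\nabla_{\mathbf{w}}\mathcal{L}^{\text{train}}(\mathbf{w}^{(t)};\Theta^{(t+1)})$ for the direction actually used in Eq.~\ref{eq10}, the $L$-smoothness descent lemma applied to $\mathcal{L}^{\text{train}}(\cdot;\Theta^{(t+1)})$, together with the learning-rate condition $\lambda_1^{(t)}\le 1/M$, gives in expectation over the mini-batch
\[
\mathbb{E}\,\mathcal{L}^{\text{train}}\bigl(\mathbf{w}^{(t+1)};\Theta^{(t+1)}\bigr)\le \mathbb{E}\,\mathcal{L}^{\text{train}}\bigl(\mathbf{w}^{(t)};\Theta^{(t+1)}\bigr)-\tfrac{\lambda_1}{2}\,\mathbb{E}\bigl\|\tilde{g}^{(t)}\bigr\|_2^2+R_t,
\]
where $R_t=\mathcal{O}\bigl(L(\lambda_1)^2\bigr)$ collects the variance of the $\rho$-bounded stochastic gradient. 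I would then re-express the right-hand side at the previous internal state through $\bigl|\mathcal{L}^{\text{train}}(\mathbf{w}^{(t)};\Theta^{(t+1)})-\mathcal{L}^{\text{train}}(\mathbf{w}^{(t)};\Theta^{(t)})\bigr|=\mathcal{O}\bigl(\delta\,\|\Theta^{(t+1)}-\Theta^{(t)}\|\bigr)$, so that the potential $\Phi^{(t)}:=\mathcal{L}^{\text{train}}(\mathbf{w}^{(t)};\Theta^{(t)})$ obeys $\Phi^{(t+1)}\le\Phi^{(t)}-\tfrac{\lambda_1}{2}\,\mathbb{E}\|\tilde{g}^{(t)}\|_2^2+(\text{drift})+R_t$.

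The decisive point, and the step I expect to be the main obstacle, is to show that the $\Theta$-drift is genuinely of order $\lambda_1\lambda_2$ per step rather than $\lambda_2$. This holds because the meta-update Eq.~\ref{eq9} differentiates through the intermediate state $\hat{\mathbf{w}}^{(t)}(\Theta)$, which depends on $\Theta$ only via the $\lambda_1$-scaled gradient step of Eq.~\ref{eq8}; hence $\nabla_{\Theta}L^{j}_{i,\text{meta}}(\hat{\mathbf{w}}^{(t)}(\Theta))$ carries a factor $\lambda_1$, and combining the $\delta$-bound, the $\rho$-bound, and $M$-smoothness yields $\|\Theta^{(t+1)}-\Theta^{(t)}\|=\mathcal{O}(\lambda_1\lambda_2)$. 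This quantitative estimate is exactly the one produced inside the proof of Theorem~\ref{th1}, which is why the statement is phrased ``under the conditions of Theorem~\ref{th1}''. Consequently the accumulated drift is $\sum_{t\le T}\mathcal{O}(\lambda_1\lambda_2)=\mathcal{O}(1)$ under a schedule with $\lambda_1,\lambda_2$ of order $1/\sqrt{T}$, whereas a naive $\mathcal{O}(\lambda_2)$ drift would accumulate to $\mathcal{O}(\sqrt{T})$ and destroy the rate.

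Finally I would telescope. Summing the potential inequality over $t=1,\dots,T$, using $\Phi\ge 0$ and the square-summability $\sum_t(\lambda_1^{(t)})^2<\infty$ to absorb the variance terms $R_t$, gives $\min_{t}\mathbb{E}\|\tilde{g}^{(t)}\|_2^2\le\bigl(2\Phi^{(1)}+\mathcal{O}(1)\bigr)\big/\sum_t\lambda_1^{(t)}=\mathcal{O}(1/\sqrt{T})$. To convert this into the quantity of Eq.~\ref{eq33}, I would pass from $\tilde{g}^{(t)}$ to the target gradient $\nabla\mathcal{L}^{\text{train}}(\hat{\mathbf{w}}^{(t)};\Theta^{(t)})$ in two controlled steps: the move $\mathbf{w}^{(t)}\to\hat{\mathbf{w}}^{(t)}$ costs only a factor $(1+L\lambda_1)$ by $L$-smoothness, while the move $\Theta^{(t+1)}\to\Theta^{(t)}$ costs an additive $\mathcal{O}(\lambda_1\lambda_2)=\mathcal{O}(1/T)$ by the gradient-Lipschitz-in-$\Theta$ estimate; both are lower order, so the $\mathcal{O}(1/\sqrt{T})$ bound survives, with $C$ collecting $\Phi^{(1)}$, $M$, $L$, $\delta$, $\rho$, $\mathcal{B}$, $n$, $c_1$, and $c_2$. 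Setting the right-hand side equal to $\epsilon$ then gives the $\mathcal{O}(1/\epsilon^2)$ iteration complexity.
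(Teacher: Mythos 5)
Your proposal follows essentially the same route as the paper's Appendix A.2: decompose the per-iteration change of $\mathcal{L}^{\text{train}}$ into a smoothness-based descent term in $\mathbf{w}$ and a $\Theta$-drift term controlled via the $\delta$-bounded gradient of $\mathcal{V}$, then telescope and divide by $\sum_t \lambda_1^{(t)}$. If anything you are more explicit than the paper at the one delicate point — the paper simply asserts that $\sum_t \lambda_2^{(t)}\|\nabla\mathcal{L}^{\text{meta}}(\hat{\mathbf{w}}^{(t+1)}(\Theta^{(t)}))\|<\infty$, whereas you justify it by observing that the meta-gradient inherits a factor $\lambda_1$ from the intermediate step, making the drift $\mathcal{O}(\lambda_1\lambda_2)$ per iteration; you also handle the passage from $\nabla\mathcal{L}^{\text{train}}(\mathbf{w}^{(t)};\Theta^{(t+1)})$ to the quantity actually appearing in the theorem, which the paper glosses over.
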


\subsection{Time complexity analysis}

Let$|\mathcal{V}|$ be the number of nodes and $|\mathcal{E}|$ be the number of hyperedges. The number of nonzero entries in $\mathbf{H}$ is denoted by $\mathrm{nnz}(\mathbf{H})$. Equivalently, $\mathrm{nnz}(\mathbf{H}) = |\mathcal{V}|\, d_\mathcal{V} = |\mathcal{E}|\, d_\mathcal{E}$, where $d_\mathcal{V}$ denotes the average degree of nodes  and $d_\mathcal{E}$ denotes the average degree of hyperedges.

For the \emph{external} HGNN classifier, the costs are dominated by two sparse aggregations (node to hyperedge and hyperedge to node) and a feature transformation, resulting in $O(\mathrm{nnz}(\mathbf{H})\,d)+O(|\mathcal{V}| d^2)=O(|\mathcal{V}| d_\mathcal{V} d+|\mathcal{V}| d^2)$ per layer. The feature-similarity attention mechanism requires an additional $O(\mathrm{nnz}(\mathbf{H})\,d)$ of the same order, while the structural-similarity attention mechanism is based on degree or neighbor statistics that can be precomputed once in $O(\mathrm{nnz}(\mathbf{H}))$ and thus negligible during training. For the \emph{internal} MT-MWN, the forward weight generation introduces $O(|\mathcal{V}| h)$ per round, where $h$ is the hidden dimension of the meta-network, and the update on the meta set requires $O(m_{\text{meta}}h)$ with $m_{\text{meta}}\ll |\mathcal{V}|$, which is usually negligible compared with the external HGNN. One-time costs include computing node overlap levels $O(|\mathcal{V}| d_\mathcal{V})=O(|\mathcal{E}| d_\mathcal{E})$ and running $K$-means clustering $O(|\mathcal{V}| K)$ with small $K$. Therefore, the overall training complexity per layer is
\[
O\!\left(|\mathcal{V}| d_\mathcal{V} d+|\mathcal{V}| d^2+|\mathcal{V}| h\right),
\]
which reduces to $O(|\mathcal{V}| d_\mathcal{V} d+|\mathcal{V}| d^2)$ when $h$ is much smaller than $d$. 
Intuitively, the dominant cost comes from the sparse hypergraph aggregations and feature transformations, while the meta-weight module introduces only a lightweight linear overhead.

\section{Experiments}\label{s5}


\subsection{Datasets}

\begin{table}[htbp]
\caption{Dataset Statistics}
\centering
\scriptsize
\begin{tabular}{ccccccc}
\toprule
Dataset & Domains & Nodes & Hyperedges &  Features  & Classes & Training Size \\
\midrule
{CA-Cora} & Network & 2708 & 1072 & 1433 & 7 & 140 \\
{Citeseer} & Network & 3312 & 1079 & 3703 & 6 & 138 \\
{20news} & Text & 16242 & 100 & 100 & 4 & 80 \\
{Reuters} & Text & 10000 & 10000 & 2000 & 4 & 400 \\
{ModelNet} & Image & 12311 & 12311 & 100 & 40 & 800 \\
{Mushroom} & Image & 8124 & 298 & 22 & 2 & 40\\
\bottomrule
\end{tabular}
\label{tab1}
\end{table}

We assess the performance of MM-HGNN using the following real-world datasets.
The dataset statistics are provided in Table~\ref{tab1}, as shown in the first six columns.

\noindent\textbf{CA-Cora}\footnote{https://people.cs.umass.edu/mccallum/data.html}: We construct a coauthorship hypergraph, where each node represents a publication and each hyperedge contains all publications by the same author \cite{yadati2019hypergcn}.

\noindent\textbf{Citeseer}\footnote{https://linqs.soe.ucsc.edu/data}: We form a cocitation hypergraph, where each node represents a paper and each hyperedge contains all papers cited by a particular paper.

\noindent\textbf{20news}\footnote{https://archive.ics.uci.edu/ml/index.php\label{foot2}} : We create a news-word hypergraph, where each node represents a news article and each hyperedge represents all articles containing the same word. The construction process follows \cite{chien2021you}.

\noindent\textbf{Reuters} \cite{10.1145/2809695.2809718}: We build \(k\)-uniform hypergraphs in the text domain, where nodes represent words and hyperedges capture higher-order relationships among co-occurring words.

\noindent\textbf{ModelNet}\footnote{https://modelnet.cs.princeton.edu/}: We form a nearest-neighbor hypergraph, where each node represents a CAD model and each hyperedge is constructed based on Multi-view Convolutional Neural Network (MVCNN) features with ten nearest neighbors.

\noindent\textbf{Mushroom}\textsuperscript{\ref{foot2}}: We establish a mushroom-feature hypergraph, where each node represents a mushroom and each hyperedge contains all data points with the same categorical features.

\subsection{Baselines}

We compare OMA-HGNN with the following nine baseline methods:

\noindent\textbf{HGNN} \cite{2019Hypergraph}: This hypergraph convolutional network approximates hypergraph convolution using truncated Chebyshev polynomials.

\noindent\textbf{HyperGCN} \cite{yadati2019hypergcn}: This method transforms the hypergraph into subgraphs and introduces mediator nodes to convert the hypergraph into a standard graph for graph convolution.

\noindent\textbf{HyperSAGE} \cite{2020HyperSAGE}: This model employs a two-stage message passing mechanism to explore hypergraph structures. 

\noindent\textbf{UniGNN} \cite{2021UniGNN}: This unified framework elucidates the message passing processes in both graph and hypergraph neural networks. We present the results of UniGCN and UniGAT.

\noindent\textbf{AllSet} \cite{chien2021you}: This supervised hypergraph model propagates information through a multiset function learned by Deep Sets \cite{zaheer2017deep} and Set Transformer \cite{lee2019set}.

\noindent\textbf{HyperAtten} \cite{bai2021hypergraph}: This model incorporates hypergraph convolution and attention mechanisms to enhance representation learning. 

\noindent\textbf{DPHGNN} \cite{10.1145/3637528.3672047}: This model employs a dual-perspective hypergraph neural network that combines topology-aware spectral inductive biases with spatial message passing via equivariant operator learning.

\noindent\textbf{KHGNN} \cite{11063418}: This model introduces a kernelized aggregation strategy that adaptively combines mean- and max-based aggregations via learnable parameters.

\subsection{Experimental setup}
Each dataset is divided into training and test sets, with ten random splits generated to ensure robustness.
The training set split size can be found in the last column of Table~\ref{tab1}. 
All models are trained and evaluated on the same splits to maintain consistency. For OMA-HGNN, the validation set for training the MWN model is selected with minimal
attention bias from the test set, matching the size of the training set. 
The model is trained by minimizing the cross-entropy loss with the Adam optimizer.  The hidden layer dimensionality is set to 64, and the model consists of 2 layers. Additionally, the hyperparameters for all baselines are configured according to the optimal settings specified in their original publications.

\subsection{Experimental results}

\subsubsection{Comparative analysis}
To validate the effectiveness of the proposed OMA-HGNN, we conduct comparative experiments against nine  state-of-the-art baselines on six benchmark datasets. The comparison results are presented in Table \ref{tab2}. The highest accuracy on each dataset is highlighted in bold, while the second-highest results are underlined for clarity.

\begin{table*}[htbp]
\centering
\caption{Comparison results of node classification accuracy(\%).}
\resizebox{1\textwidth}{!}{
\begin{tabular}{cccccccc}
\toprule
Dataset & CA-Cora & Citeseer & 20news & Reuters & ModelNet & Mushroom  \\
\midrule
HGNN       & $75.7 \pm 1.0$ & $64.8 \pm 1.0$ & $76.5 \pm 1.7$ &    $92.2 \pm 0.6$           & \underline{$94.5 \pm 0.1$}  & $94.5 \pm 1.9$ \\
HyperGCN   & $70.1 \pm 0.7$ & $63.4 \pm 1.1$ & $70.3 \pm 1.6$      &   $92.1  \pm 0.6$             & $93.8 \pm 1.7$  & $89.2 \pm 2.7$ \\
HyperSAGE  & $58.4 \pm 1.4$ & $59.6 \pm 1.3$ & $68.5 \pm 1.8$      &       $90.5 \pm 0.7$         & $89.6 \pm 1.0$  & $90.8 \pm 2.3$ \\
UniGCN     & $75.3 \pm 1.2$ & $63.9 \pm 1.4$ & $78.2 \pm 0.5$ &    $89.9  \pm 1.1$            & $94.0 \pm 0.2$ & $94.6 \pm 2.0$ \\
UniGAT     & $75.7 \pm 1.2$ & $64.1 \pm 1.5$ & {$78.0 \pm 0.7$} &    $89.5  \pm 1.7$            & $93.5 \pm 0.1$  & $94.2 \pm 1.6$ \\
AllSet     & $58.4 \pm 1.5$ & $57.1 \pm 1.5$ & $68.7 \pm 1.6$      &      $91.2  \pm 0.3$          & $90.2 \pm 0.4$  & $90.4 \pm 2.3$ \\
HyperAtten & $65.9 \pm 0.8$ & $56.2 \pm 3.3$ & $75.9 \pm 1.1$      &   $86.8 \pm 0.9$             & $91.9 \pm 0.1$  & $87.2 \pm 1.9$ \\
DPHGNN     & $75.0 \pm 2.0$ & \underline{$65.7 \pm 4.2$} & \bm{$80.6 \pm 0.3$}      & \bm{$94.6 \pm 0.2$}              & $90.8 \pm 0.2$  & \underline{$95.3 \pm 1.2$} \\
KHGNN      & \underline{$75.8 \pm 0.7$} & $63.6 \pm 1.0$ & $77.5 \pm 0.3$ & \underline{$93.7 \pm 0.2$} & \bm{$94.8 \pm 0.2$} & $90.1 \pm 0.8$ \\
OMA-HGNN   & \bm{$78.5 \pm 1.3$} & \bm{$69.5 \pm 2.2$} & \underline{$79.6 \pm 0.7$} &   $93.4 \pm 0.4$             & \bm{$94.8 \pm 0.1$} & \bm{$96.1 \pm 1.5$} \\
\bottomrule
\end{tabular}
}
\label{tab2}
\end{table*}

\begin{figure*}[htbp]
  \centering
  \subfigure[]{\includegraphics[width=0.45\linewidth]{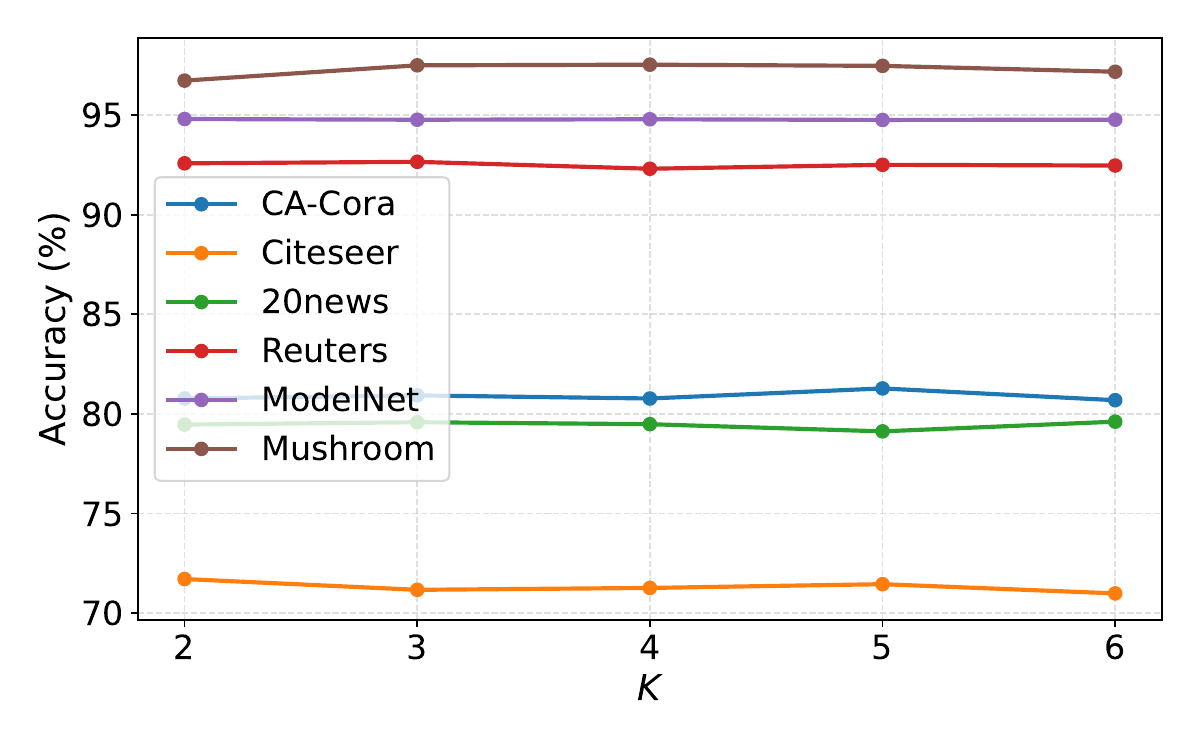}}
  \subfigure[]{\includegraphics[width=0.45\linewidth]{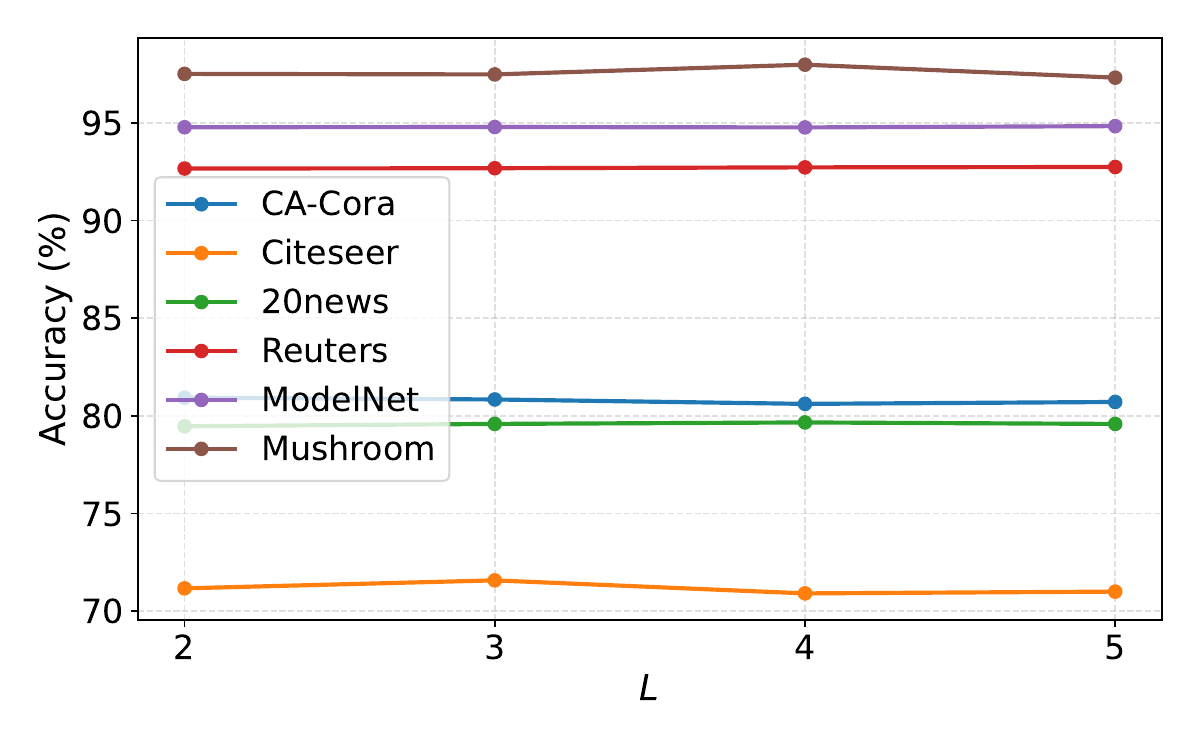}}
  \caption{Overall comparison of OMA-HGNN under different hyperparameter settings: (a) different values of overlap levels $K$; (b) different hidden layers of  MT-MWN $L$.}
  \label{fig:accuracy}
\end{figure*}

Table \ref{tab2} presents the comparison results of node classification accuracy across six datasets. The highest accuracy on each dataset is highlighted in bold, while the second-highest results are underlined for clarity.
As observed, OMA-HGNN consistently achieves the best performance across CA-Cora, Citeseer, ModelNet and Mushroom datasets, demonstrating its superior capability in learning discriminative node representations. In particular, it yields significant improvements of approximately 4\% and 3\%, on the Citeseer, CA-Cora datasets, respectively. These gains are largely attributed to its overlap-aware meta-learning attention mechanism, which effectively captures higher-order dependencies in hypergraphs. Moreover, the improvements of OMA-HGNN are especially pronounced on network datasets (e.g., Citeseer, CA-Cora), likely because their hypergraph construction leverages explicit structural information, offering rich relational cues for representation learning.

\begin{figure*}[htbp]
  \centering
  \subfigure[]{\includegraphics[width=0.31\linewidth]{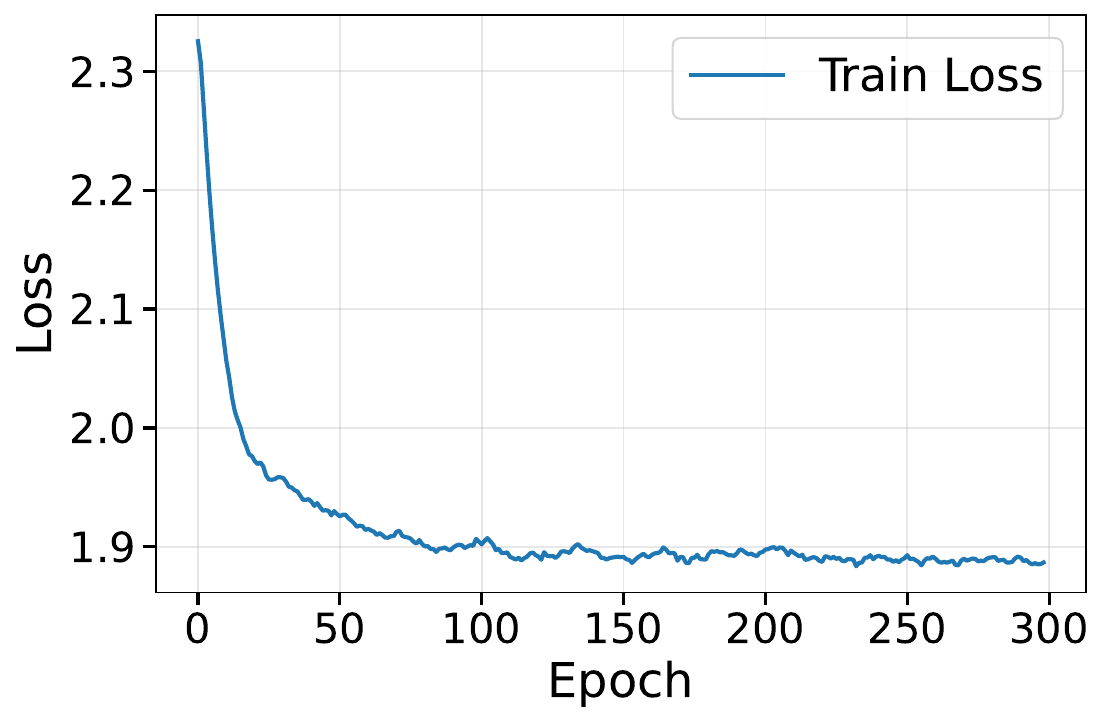}}
  \subfigure[]{\includegraphics[width=0.31\linewidth]{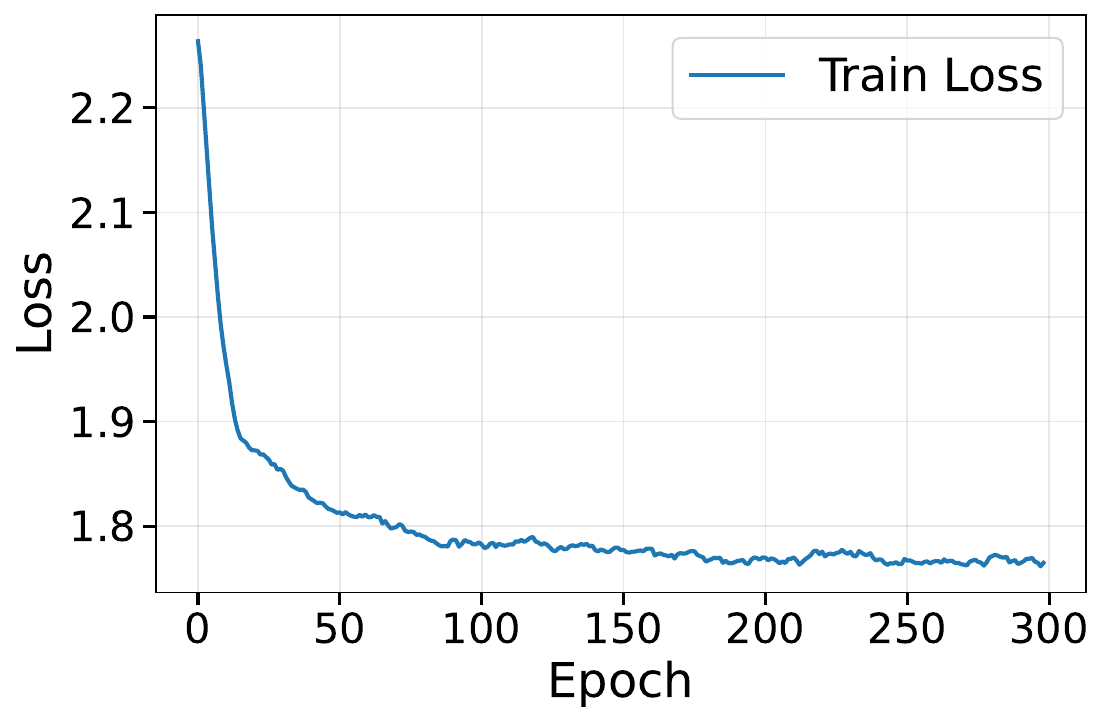}}
    \subfigure[]{\includegraphics[width=0.31\linewidth]{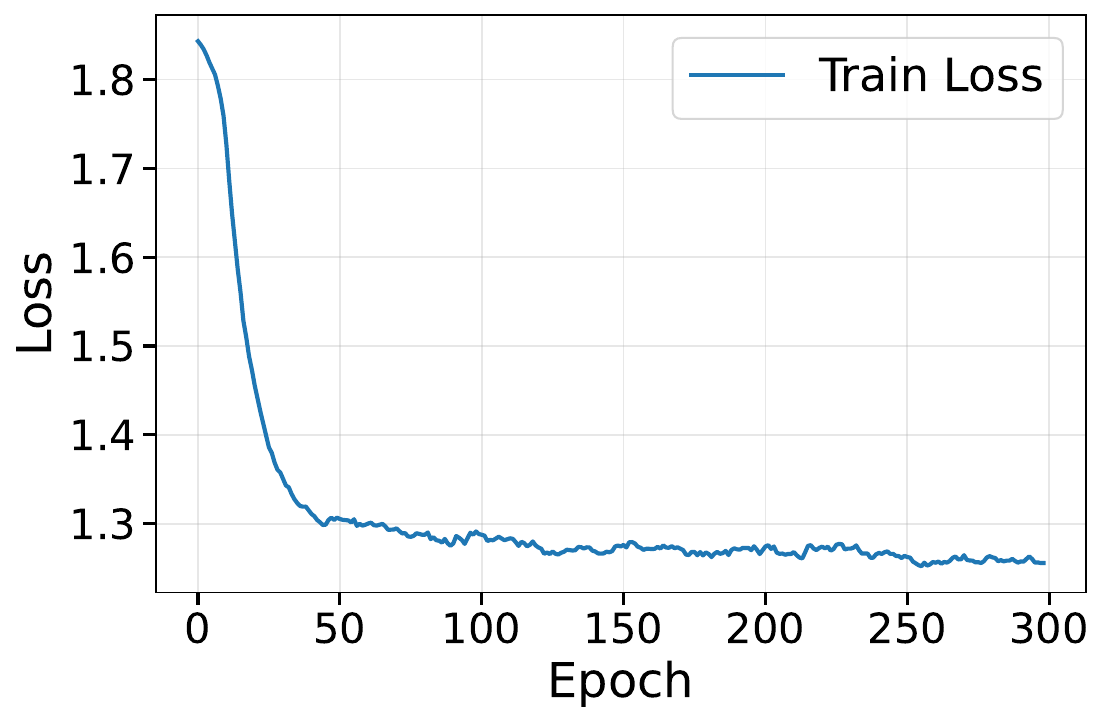}}
  \subfigure[]{\includegraphics[width=0.31\linewidth]{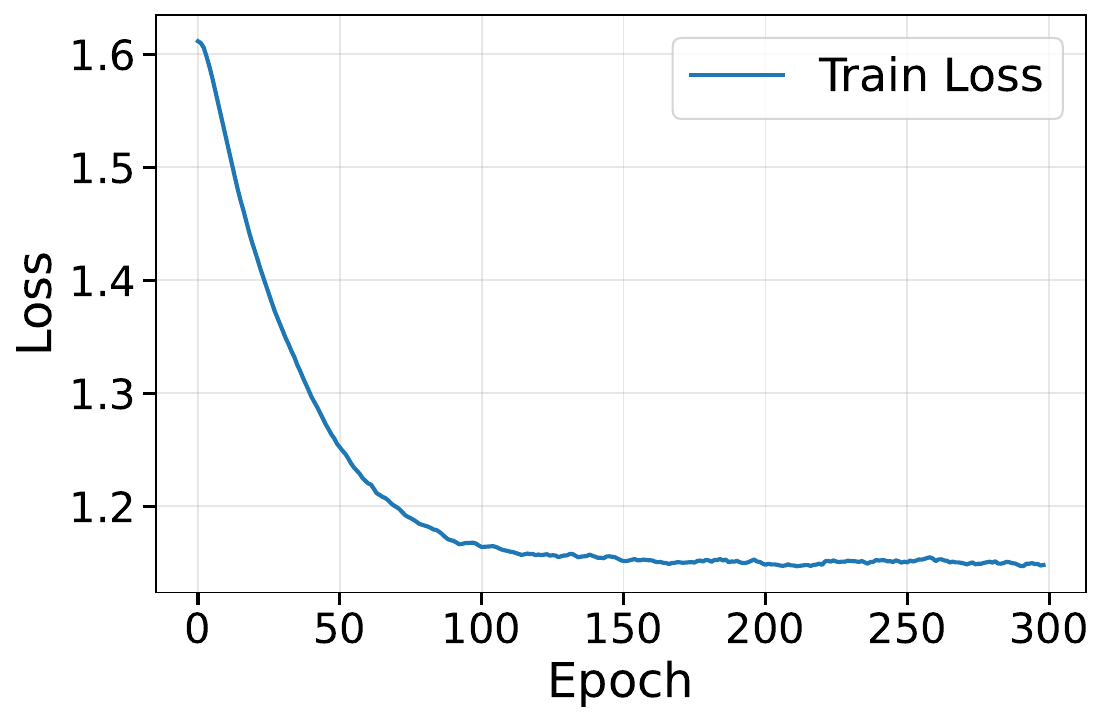}}
    \subfigure[]{\includegraphics[width=0.31\linewidth]{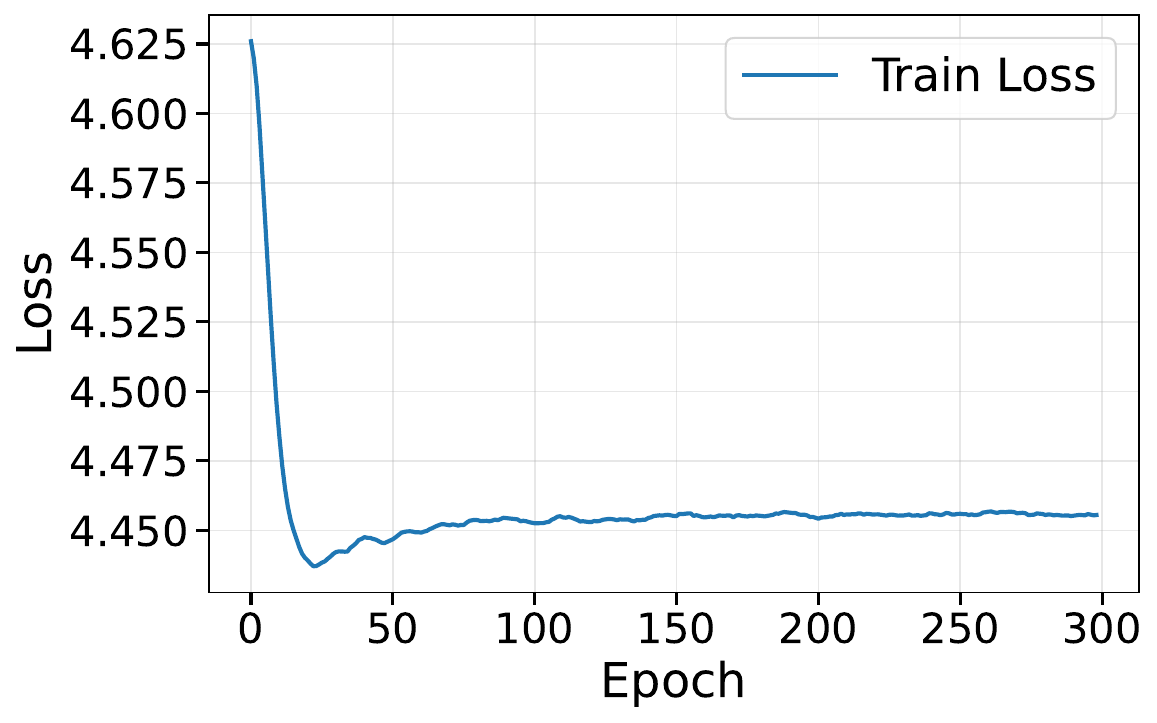}}
  \subfigure[]{\includegraphics[width=0.31\linewidth]{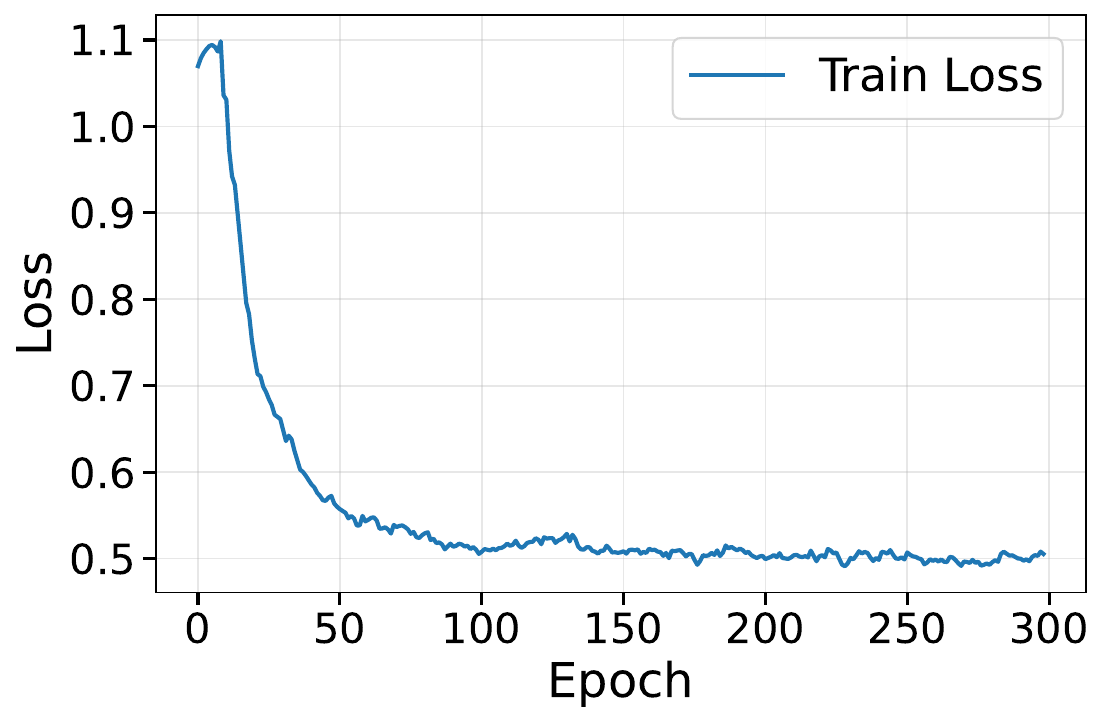}}

  \caption{Training loss curves on benchmark datasets: (a) CA-Cora, (b) Citeseer, (c) 20news, (d) Reuters, (e) ModelNet and (f) Mushroom. }
  \label{fig:loss}
\end{figure*}

On 20news and Reuters, OMA-HGNN attains accuracies of 79.6\% and 93.4\%, slightly lower than DPHGNN. This difference can be explained by dataset-specific characteristics: DPHGNN is particularly suited for handling high-dimensional sparse text data.  Nevertheless, given that the performance gaps are within 0.2\% and OMA-HGNN achieves the best results on the remaining four datasets, it still demonstrates robust and consistent advantages across the majority of tasks.

\subsubsection{Parameters sensitivity}

To better understand the robustness and practicality of OMA-HGNN, we further conduct parameter sensitivity analysis.
By systematically varying these parameters, we aim to examine whether OMA-HGNN maintains stable accuracy across a wide range of settings and to identify reasonable choices that balance model complexity and generalization ability.

Figure~\ref{fig:accuracy} (a) compares the effect of different overlap levels $K$ on classification accuracy. Across all six datasets, the curves exhibit highly consistent trends: performance remains stable overall, with slight improvements when $K$ is set to $3$ or $4$. This indicates that OMA-HGNN is robust to the choice of partition granularity. This indicates that a moderate number of partitions provides sufficient diversity to distinguish nodes with different overlap levels while still keeping enough samples per group for stable training, whereas overly coarse or overly fine partitions can dilute the effectiveness of the meta-learning process.

Figure~\ref{fig:accuracy} (b) shows the impact of different hidden layers $L$ of MT-MWN. The accuracy generally increases when the hidden layers grow from $2$ to $3$, but further depth yields little or even negative gain. This suggests that a shallow-to-moderate architecture already captures the necessary task-specific weighting patterns, while deeper structures introduce redundant complexity and risk overfitting. Taken together, these results demonstrate that OMA-HGNN consistently achieves strong performance under reasonable settings, confirming both the robustness and the practicality of the proposed design.

\subsubsection{Visualization of training loss}

Figure~\ref{fig:loss} presents the training loss trajectories on benchmark datasets.
Across all datasets, the loss curves exhibit a clear and monotonic downward trend, demonstrating that OMA-HGNN converges rapidly within a relatively small number of epochs.
The convergence is stable, with only minor fluctuations around the decreasing trajectory, which indicates that the overlap-aware weighting mechanism effectively suppresses noisy gradients and prevents oscillatory updates.
We also observe that the convergence speed is consistent across heterogeneous datasets, and the plateau levels of the loss are similar, suggesting that the model is insensitive to dataset-specific idiosyncrasies.

\section{Conclusion}\label{s6}

In this paper, we propose a novel framework named Overlap-aware Meta-learning Attention for Hypergraph Neural Networks (OMA-HGNN).
First, we weight and linearly combine the losses from SS-based and FS-based attention mechanisms with weighted factors to form the total loss of the external HGNN model.
To determine the weighted factors, we categorize nodes into distinct tasks based on their overlap levels and develop a Multi-Task MWN (MT-MWN).
Then, we jointly train the external HGNN model and the internal MT-MWN model.
Specifically, the losses of the external model at each step are used to update the internal internal model's parameters, which determines the weighted factors. In turn, the weighted factors are fed back into the external model to update its parameters and obtain the losses for the internal internal model at next step.
Finally, we present the iterative algorithm for OMA-HGNN and conduct a theoretical convergence analysis. Extensive experiments across multiple datasets for node classification  demonstrate that the proposed method outperforms state-of-the-art methods, highlighting its effectiveness.

\section*{Acknowledgments}

This work was supported by the Natural Science Foundation of China under Grant No.  12531019 and the Science and Technology Commission of Shanghai Municipality under Grant No. 22JC1401401.

\bibliography{ref}

\newpage
\appendix
\section{Proposition 1}\label{7.1}

\begin{proposition}
Let $\Theta^{(t)}$ represent the parameters of the internal model at iteration $t$, and let $\hat{\mathbf{w}}^{(t+1)}(\Theta)$ be the predicted weights as a function of $\Theta$. The update rule for the internal model parameters $\Theta$ is given by:

\begin{equation}
\label{eq12}
\Theta^{(t+1)} = \Theta^{(t)} - \lambda_2 \frac{1}{m} \sum_{i=1}^m \left( \nabla_{\Theta} L_{i,\text{meta}}^1(\hat{\mathbf{w}}^{(t+1)}(\Theta)) + \nabla_{\Theta} L_{i,\text{meta}}^2(\hat{\mathbf{w}}^{(t+1)}(\Theta)) \right) \Big|_{\Theta^{(t)}}
\end{equation}

For simplicity, let $\mathcal{V}_j^{(t)}(\Theta)$ denote the auxiliary representation defined as:

\[
\mathcal{V}_j^{(t)}(\Theta) = \mathcal{V}(L_{j,\text{train}}^1(\mathbf{w}^{(t)}), L_{j,\text{train}}^2(\mathbf{w}^{(t)}); \Theta),
\]
where $L_{j,\text{train}}^1(\mathbf{w}^{(t)})$ and $L_{j,\text{train}}^2(\mathbf{w}^{(t)})$ represent the training loss for the $j$-th sample.  
The gradient of the loss function with respect to $\Theta$ can be derived by:

\begin{equation}
\label{eq13}
\begin{aligned}
& \frac{1}{m} \sum_{i=1}^m \left( \nabla_{\Theta} L_{i,\text{meta}}^1(\hat{\mathbf{w}}^{(t+1)}(\Theta)) + \nabla_{\Theta} L_{i,\text{meta}}^2(\hat{\mathbf{w}}^{(t+1)}(\Theta)) \right) \Big|_{\Theta^{(t)}} \\
&= \frac{1}{m} \sum_{i=1}^m \left( \frac{\partial L_{i,\text{meta}}^1(\hat{\mathbf{w}}^{(t+1)}(\Theta))}{\partial \hat{\mathbf{w}}^{(t+1)}(\Theta)} + \frac{\partial L_{i,\text{meta}}^2(\hat{\mathbf{w}}^{(t+1)}(\Theta))}{\partial \hat{\mathbf{w}}^{(t+1)}(\Theta)} \right)\\
& \quad\quad \times \sum_{j=1}^n \frac{\partial \hat{\mathbf{w}}^{(t+1)}(\Theta)}{\partial \mathcal{V}_j^{(t)}(\Theta)} \frac{\partial \mathcal{V}_j^{(t)}(\Theta)}{\partial \Theta} \Big|_{\Theta^{(t)}}
\end{aligned}
\end{equation}

Recall  the parameter update equation  of the external model as follows:
$$
\label{eq11}
\begin{aligned}
 \mathbf{w}^{(t+1)} = \mathbf{w}^{(t)}-
 \left.\lambda_{1} \sum_{i=1}^n \left(\alpha_i^{(t)}
\nabla_{\mathbf{w}}L_{i,train}^1(\mathbf{w}^{(t)})+ \beta_i^{(t)} \nabla_{\mathbf{w}}L_{i,train}^2(\mathbf{w}^{(t)})\right)\right|_{\mathbf{w}^{(t)}}
\end{aligned}
$$

For the sake of proof, we assume that for each sample, the following holds:
$$
\alpha_i^{(t)} + \beta_i^{(t)} = 1, \quad \alpha_i^{(t)} = \mathcal{V}\left(L_{i,\text{train}}^1(\mathbf{w}^{(t)}), L_{i,\text{train}}^2(\mathbf{w}^{(t)}); \Theta\right).
$$

Let $L_{i,\text{meta}}(\hat{\mathbf{w}}^{(t+1)}(\Theta)) = L_{i,\text{meta}}^1(\hat{\mathbf{w}}^{(t+1)}(\Theta)) + L_{i,\text{meta}}^2(\hat{\mathbf{w}}^{(t+1)}(\Theta))$ represent the combined meta-loss function.
And $L_{j,\text{train}-}(\mathbf{w}) = L_{j,\text{train}}^1(\mathbf{w}) - L_{j,\text{train}}^2(\mathbf{w})$ represents the combined training loss for the $j$-th sample.
Substituting (\ref{eq13}) and these formulas into equation (\ref{eq12}), we obtain the update rule for $\Theta$:

\begin{equation}
\Theta^{(t+1)} = \Theta^{(t)} + \lambda_1 \lambda_2 \sum_{j=1}^n \left( \frac{1}{m} \sum_{i=1}^m G_{ij} \right) \frac{\partial \mathcal{V}_j^{(t)}(\Theta)}{\partial \Theta} \Big|_{\Theta^{(t)}}
\end{equation}
where $G_{ij}$ is defined as:

$$
G_{ij} = \frac{\partial L_{i,\text{meta}}(\hat{\mathbf{w}}^{(t+1)}(\Theta))}{\partial \hat{\mathbf{w}}^{(t+1)}(\Theta)} \Big|_{\hat{\mathbf{w}}^{(t+1)}(\Theta)}^T \frac{\partial L_{j,\text{train}-}(\mathbf{w})}{\partial \mathbf{w}} \Big|_{\mathbf{w}^{(t)}}.
$$
%
%
\end{proposition}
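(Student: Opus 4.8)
The plan is to derive the update rule for $\Theta$ by carefully applying the chain rule to the meta-gradient in Eq.~\eqref{eq12}, and then substituting in the explicit form of the intermediate weight update $\hat{\mathbf{w}}^{(t+1)}(\Theta)$. First I would write out the intermediate external-model update in terms of $\Theta$, using the simplifying assumption $\alpha_i^{(t)} + \beta_i^{(t)} = 1$ with $\alpha_i^{(t)} = \mathcal{V}_i^{(t)}(\Theta)$. Under this reparametrization, the weighted combination $\alpha_i L_{i,\text{train}}^1 + \beta_i L_{i,\text{train}}^2$ becomes $\mathcal{V}_i^{(t)}(\Theta)\,L_{i,\text{train}}^1 + (1-\mathcal{V}_i^{(t)}(\Theta))\,L_{i,\text{train}}^2$, so that the explicit $\Theta$-dependence of the gradient step is carried entirely by $\mathcal{V}_i^{(t)}(\Theta)$ multiplying the \emph{difference} $L_{j,\text{train}}^1 - L_{j,\text{train}}^2 = L_{j,\text{train}-}$. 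This is the structural observation that makes the combined training loss $L_{j,\text{train}-}$ appear naturally.

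Next I would compute $\frac{\partial \hat{\mathbf{w}}^{(t+1)}(\Theta)}{\partial \mathcal{V}_j^{(t)}(\Theta)}$ from the intermediate update formula. Since $\hat{\mathbf{w}}^{(t+1)}(\Theta) = \mathbf{w}^{(t)} - \lambda_1 \sum_{j=1}^n \mathcal{V}_j^{(t)}(\Theta)\,\nabla_{\mathbf{w}} L_{j,\text{train}-}(\mathbf{w}^{(t)}) + (\text{terms independent of }\Theta)$, differentiating with respect to $\mathcal{V}_j^{(t)}(\Theta)$ yields $-\lambda_1 \nabla_{\mathbf{w}} L_{j,\text{train}-}(\mathbf{w})\big|_{\mathbf{w}^{(t)}}$. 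Substituting this together with Eq.~\eqref{eq13} into Eq.~\eqref{eq12} produces the meta-update; the two minus signs (one from the gradient descent step on $\Theta$, one from the derivative of $\hat{\mathbf{w}}$ with respect to $\mathcal{V}_j$) combine to give the overall plus sign in front of $\lambda_1 \lambda_2$. Collecting the remaining factors, the meta-loss sensitivity $\frac{\partial L_{i,\text{meta}}(\hat{\mathbf{w}}^{(t+1)})}{\partial \hat{\mathbf{w}}^{(t+1)}}$ contracted against $\nabla_{\mathbf{w}} L_{j,\text{train}-}(\mathbf{w})$ is exactly the inner product $G_{ij}$, which completes the identification.

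I would then assemble the pieces: the double sum over $i$ (meta samples) and $j$ (training samples), the factor $\lambda_1 \lambda_2$, the averaged inner product $\frac{1}{m}\sum_{i=1}^m G_{ij}$, and the remaining Jacobian factor $\frac{\partial \mathcal{V}_j^{(t)}(\Theta)}{\partial \Theta}$ evaluated at $\Theta^{(t)}$. This reproduces the claimed update rule. The main obstacle I anticipate is bookkeeping rather than conceptual: tracking the sign conventions through the nested chain rule and correctly isolating which occurrences of $\Theta$ are the ``explicit'' ones (inside $\mathcal{V}_j^{(t)}(\Theta)$ appearing in $\hat{\mathbf{w}}$) versus the ``frozen'' current value $\Theta^{(t)}$ at which everything is finally evaluated. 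Care is also needed to confirm that under $\alpha_i + \beta_i = 1$ the $L^2$ terms that are \emph{not} multiplied by $\mathcal{V}_j$ contribute no $\Theta$-dependence and hence drop out of $\frac{\partial \hat{\mathbf{w}}^{(t+1)}}{\partial \mathcal{V}_j^{(t)}}$, which is what cleanly produces the difference loss $L_{j,\text{train}-}$ inside $G_{ij}$.
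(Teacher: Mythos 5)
Your proposal is correct and follows essentially the same route as the paper: apply the chain rule through $\hat{\mathbf{w}}^{(t+1)}(\Theta)$, use $\alpha_i+\beta_i=1$ so that the $\Theta$-dependence of the intermediate update is carried by $\mathcal{V}_j^{(t)}(\Theta)$ multiplying the difference loss $L_{j,\text{train}-}$, read off $\partial\hat{\mathbf{w}}^{(t+1)}/\partial\mathcal{V}_j^{(t)}=-\lambda_1\,\partial L_{j,\text{train}-}/\partial\mathbf{w}\big|_{\mathbf{w}^{(t)}}$, and let the two negative signs produce the $+\lambda_1\lambda_2$ factor with $G_{ij}$ as the resulting inner product. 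Your explicit remark that the residual $L^2$ term drops out of the Jacobian because it carries no $\Theta$-dependence is the same (implicit) step the paper relies on.
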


\section{Lemma 1}

\begin{lemma}\label{le}
Suppose the internal model loss function (\ref{eq:ml})\emph{} is Lipschitz smooth with constant $M$, and $\mathcal{V}(\cdot, \cdot ; \Theta)$ is differential with a $\delta$-bounded gradient and twice differential with its Hessian bounded by $\mathcal{B}$, and the loss function $\ell_1$ and $\ell_2$ have $\rho$-bounded gradient with respect to training/validation set. Then the gradient of $\Theta$ with respect to the internal model loss is Lipschitz continuous.

\end{lemma}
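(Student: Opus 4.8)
The plan is to start from the closed-form expression for the meta-gradient derived in Proposition~1 and to prove that this vector field is Lipschitz in $\Theta$ by directly bounding its increment between two arbitrary parameter values $\Theta_1$ and $\Theta_2$. Writing the gradient (up to the scalar factor $-\lambda_1$) as a finite sum over training samples,
\begin{equation}
g(\Theta) = -\lambda_1 \sum_{j=1}^n A_j(\Theta)\, B_j(\Theta), \qquad A_j(\Theta) = \frac{1}{m}\sum_{i=1}^m G_{ij}(\Theta), \quad B_j(\Theta) = \frac{\partial \mathcal{V}_j^{(t)}(\Theta)}{\partial \Theta},
\end{equation}
I would reduce the task to controlling the scalar factors $A_j$ and the vector factors $B_j$ separately. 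The first step is to collect the elementary bounds furnished by the hypotheses: $\|B_j(\Theta)\| \le \delta$ and $\|B_j(\Theta_1) - B_j(\Theta_2)\| \le \mathcal{B}\,\|\Theta_1 - \Theta_2\|$ follow directly from the $\delta$-bounded gradient and $\mathcal{B}$-bounded Hessian of $\mathcal{V}$, while the $\rho$-bounded gradients of $\ell_1,\ell_2$ give $\|\partial L_{j,\text{train}-}/\partial\mathbf{w}\| \le 2\rho$ and $\|\partial L_{i,\text{meta}}/\partial\hat{\mathbf{w}}\| \le 2\rho$, so that $|A_j(\Theta)| \le 4\rho^2$ uniformly.

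The key intermediate step is to establish that the map $\hat{\mathbf{w}}^{(t+1)}(\Theta)$ is itself Lipschitz in $\Theta$. Since the training-loss gradients in Eq.~(\ref{eq8}) are frozen at $\mathbf{w}^{(t)}$ and $\alpha_i+\beta_i=1$, the only $\Theta$-dependence enters through $\alpha_i = \mathcal{V}(\cdot,\cdot;\Theta)$; differentiating and combining the $\delta$-bounded gradient of $\mathcal{V}$ with the $2\rho$ bound on $\nabla_{\mathbf{w}} L_{i,\text{train}-}$ yields $\|\hat{\mathbf{w}}^{(t+1)}(\Theta_1) - \hat{\mathbf{w}}^{(t+1)}(\Theta_2)\| \le L_w\,\|\Theta_1-\Theta_2\|$ with $L_w = 2\lambda_1 n \rho\delta$. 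Feeding this into the Lipschitz-smoothness of $\mathcal{L}^{\text{meta}}$ (constant $M$) controls how $\partial L_{i,\text{meta}}/\partial\hat{\mathbf{w}}$ varies with $\Theta$, because the second factor of $G_{ij}$ is independent of $\Theta$; this gives $|A_j(\Theta_1)-A_j(\Theta_2)| \le 2\rho M L_w\,\|\Theta_1-\Theta_2\|$.

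Finally I would assemble the increment by the standard product-splitting identity
\begin{equation}
A_j(\Theta_1)B_j(\Theta_1) - A_j(\Theta_2)B_j(\Theta_2) = A_j(\Theta_1)\bigl(B_j(\Theta_1)-B_j(\Theta_2)\bigr) + \bigl(A_j(\Theta_1)-A_j(\Theta_2)\bigr)B_j(\Theta_2),
\end{equation}
bound each summand with the uniform estimates above, and sum over $j$ to obtain $\|g(\Theta_1)-g(\Theta_2)\| \le \lambda_1 n\bigl(4\rho^2\mathcal{B} + 2\rho M L_w \delta\bigr)\|\Theta_1-\Theta_2\|$, a finite Lipschitz constant depending only on $M,\delta,\mathcal{B},\rho,\lambda_1,n$. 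I expect the main obstacle to be the intermediate Lipschitz bound on $\hat{\mathbf{w}}^{(t+1)}(\Theta)$ and its interaction with the Hessian term: $g$ is a product of a factor $A_j$ that depends on $\Theta$ only indirectly through $\hat{\mathbf{w}}(\Theta)$ and a factor $B_j$ that depends on $\Theta$ directly, so the two pieces must be Lipschitz-controlled by genuinely different hypotheses (smoothness of the meta loss versus the Hessian bound on $\mathcal{V}$), and the chain of compositions has to be tracked carefully so that the bound on $g$ does not circularly invoke itself through $\hat{\mathbf{w}}$.
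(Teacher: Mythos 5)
Your proposal is correct and takes essentially the same route as the paper: both decompose the meta-gradient as $-\lambda_1\sum_j(\text{scalar }G\text{-factor})\cdot\nabla_\Theta\mathcal{V}_j$, split its variation into a term controlled by the Lipschitz smoothness of the meta loss acting through the $\Theta$-Lipschitz map $\hat{\mathbf{w}}(\Theta)$ (your $L_w=2\lambda_1 n\rho\delta$ bound appears in the paper as the explicit chain-rule factor $\partial\hat{\mathbf{w}}/\partial\Theta$) and a term controlled by the Hessian bound $\mathcal{B}$ on $\mathcal{V}$. The only difference is presentational: the paper bounds the second derivative $\nabla^2_\Theta\mathcal{L}^{\text{meta}}$ and invokes the Lagrange mean value theorem, while you bound the gradient increment directly via the product-splitting identity, arriving at the same Lipschitz constant up to the paper's dimension-dependent norm factors.
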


\begin{proof}
The gradient of $\Theta$ with respect to the meta loss can be written by:

\begin{equation}
\label{eq19}
\begin{aligned}
& \left. \left(\nabla_{\Theta}
L_{i,\text{meta}}^1(\hat{\mathbf{w}}^{(t+1)}(\Theta))+\nabla_{\Theta}
L_{i,\text{meta}}^2(\hat{\mathbf{w}}^{(t+1)}(\Theta))\right)\right|_{\Theta^{(t)}} \\
= & -\left.\lambda_1 \sum_{j=1}^n\left(\left.\left. \frac{\partial L_{i,meta}(\hat{\mathbf{w}})}{\partial \hat{\mathbf{w}}}\right|_{\hat{\mathbf{w}}^{(t+1)}(\Theta)}^T  \frac{\partial L_{j,\text{train}-}(\mathbf{w})}{\partial \mathbf{w}}\right|_{\mathbf{w}^{(t)}}\right) \frac{\partial \mathcal{V}_j^{(t)}(\Theta)}{\partial \Theta}\right|_{\Theta^{(t)}}\\
= & -\left.\lambda_1\sum_{j=1}^n\left( G_{i j}\right) \frac{\partial \mathcal{V}_j^{(t)}(\Theta)}{\partial \Theta}\right|_{\Theta^{(t)}}.
\end{aligned}
\end{equation}
Taking gradient of $\Theta$ in both sides of Eq.(\ref{eq19}), we have
\begin{equation}
\label{eq20}
\begin{aligned}
& \left. \left(\nabla_{\Theta^2}^2
L_{i,\text{meta}}^1\left(\hat{\mathbf{w}}^{(t+1)}(\Theta)\right)+\nabla_{\Theta^2}^2
L_{i,\text{meta}}^2\left(\hat{\mathbf{w}}^{(t+1)}(\Theta)\right)\right)\right|_{\Theta^{(t)}}\\ = & -\lambda_1 \sum_{j=1}^n\left[\left.\left.\frac{\partial}{\partial \Theta}\left(G_{i j}\right)\right|_{\Theta^{(t)}} \frac{\partial \mathcal{V}_j^{(t)}(\Theta)}{\partial \Theta}\right|_{\Theta^{(t)}}+\left.\left(G_{i j}\right) \frac{\partial^2 \mathcal{V}_j^{(t)}(\Theta)}{\partial \Theta^2}\right|_{\Theta^{(t)}}\right].
\end{aligned}
\end{equation}

Since
the internal model loss function (\ref{eq:ml}) is Lipschitz smooth with constant $M$:
$$\left\|\left.\frac{\partial^2 L_{i,\text{meta}}^1(\hat{\mathbf{w}})}{\partial \hat{\mathbf{w}}^2}\right|_{\hat{\mathbf{w}}^{(t+1)}(\Theta)}+\left.\frac{\partial^2 L_{i,\text{meta}}^1(\hat{\mathbf{w}})}{\partial \hat{\mathbf{w}}^2}\right|_{\hat{\mathbf{w}}^{(t+1)}(\Theta)}\right\| \leq M.$$

Suppose the dimension of $\mathbf{w}$ is $n\times n'$. Since $\left\|\left.\frac{\partial L_{j,tr}^1(\mathbf{w})}{\partial \mathbf{w}}\right|_{\mathbf{w}^{(t)}}\right\| \leq \rho$ and
$\left\|\left.\frac{\partial L_{j,tr}^2(\mathbf{w})}{\partial \mathbf{w}}\right|_{\mathbf{w}^{(t)}}\right\| \leq \rho$,
according to Cauchy-Schwarz inequality,
$$\left\|\left.\frac{\partial L_{j,\text{train}-}(\mathbf{w})}{\partial \mathbf{w}}\right|_{\mathbf{w}^{(t)}}\right\|  = \left\|\left.\left(\frac{\partial L_{j,\text{train}}^1(\mathbf{w})}{\partial \mathbf{w}}-\frac{\partial L_{j,\text{train}}^1(\mathbf{w})}{\partial \mathbf{w}}\right)\right|_{\mathbf{w}^{(t)}}\right\| \leq \sqrt{n\cdot n'}\rho. $$
And $\mathcal{V}(\cdot, \cdot ; \Theta)$ is differential with a $\delta$-bounded gradient, i.e., $\left\|\left.\frac{\partial \mathcal{V}_j(\Theta)}{\partial \Theta}\right|_{\Theta^{(t)}}\right\| \leq \delta$.

For the first term in (\ref{eq20}), we have

\begin{equation}
\label{eq21}
\begin{aligned}
& \left\|\left.\left.\frac{\partial}{\partial \Theta}\left(G_{i j}\right)\right|_{\Theta^{(t)}} \frac{\partial \mathcal{V}_j^{(t)}(\Theta)}{\partial \Theta}\right|_{\Theta^{(t)}}\right\| \\
\leq & \delta\left\|\left.\frac{\partial}{\partial \Theta}\left(G_{i j}\right)\right|_{\Theta^{(t)}}\right\|\\
= & \delta\left\|\left.\left.\frac{\partial}{\partial \hat{\mathbf{w}}}\left(\left.\frac{\partial L_{i,meta}(\hat{\mathbf{w}})}{\partial \Theta}\right|_{\Theta^{(t)}}\right)\right|_{\hat{\mathbf{w}}^{(t+1)}(\Theta)} ^T \frac{\partial L_{j,\text{train}-}(\mathbf{w})}{\partial \mathbf{w}}\right|_{\mathbf{w}^{(t)}}\right\| \\
= & \delta\left\|\left.\frac{\partial}{\partial \hat{\mathbf{w}}}\left(\left.\left.\left.\frac{\partial L_{i,meta}(\hat{\mathbf{w}})}{\partial \hat{\mathbf{w}}}\right|_{\hat{\mathbf{w}}^{(t+1)}(\Theta)}(-\lambda_1) \sum_{k=1}^n \frac{\partial L_{k,\text{train}-}(\mathbf{w})}{\partial \mathbf{w}}\right|_{\mathbf{w}^{(t)}} \frac{\partial \mathcal{V}_k^{(t)}(\Theta)}{\partial \Theta}\right|_{\Theta^{(t)}}\right)\right|_{\hat{\mathbf{w}}^{(t+1)}(\Theta)} ^T \times \mathcal{A} \right\| \\
= & \delta\left\|\left.\left(\left.\left.\left.\frac{\partial^2 L_{i,meta}(\hat{\mathbf{w}})}{\partial \hat{\mathbf{w}}^2}\right|_{\hat{\mathbf{w}}^{(t+1)}(\Theta)}(-\lambda_1) \sum_{k=1}^n \frac{\partial L_{k,\text{train}-}(\mathbf{w})}{\partial \mathbf{w}}\right|_{\mathbf{w}^{(t)}} \frac{\partial \mathcal{V}_k^{(t)}(\Theta)}{\partial \Theta}\right|_{\Theta^{(t)}}\right)\right|_{\hat{\mathbf{w}}^{(t+1)}(\Theta)} ^T \times \mathcal{A}  \right\| \\
\leq & \lambda_1 n' M  n^2\rho^2 \delta^2
\end{aligned}
\end{equation}
where $\mathcal{A}=\left.\frac{\partial L_{j,\text{train}-}(\mathbf{w})}{\partial \mathbf{w}}\right|_{\mathbf{w}^{(t)}}$.
Since $\mathcal{V}(\cdot, \cdot ; \Theta)$ is twice differential with its Hessian bounded by $\mathcal{B}$, we can obtain that $\left\|\left.\frac{\partial^2 \mathcal{V}_j^{(t)}(\Theta)}{\partial \Theta^2}\right|_{\Theta^{(t)}}\right\| \leq \mathcal{B}$.

From the assumptions $\left\|\left.\frac{\partial L_{i,\text{meta}}^1(\hat{\mathbf{w}})}{\partial \hat{\mathbf{w}}}\right|_{\hat{\mathbf{w}}^{(t+1)}(\Theta)} ^T\right\| \leq \rho, \left\|\left.\frac{\partial L_{i,\text{meta}}^2(\hat{\mathbf{w}})}{\partial \hat{\mathbf{w}}}\right|_{\hat{\mathbf{w}}^{(t+1)}(\Theta)} ^T\right\| \leq \rho$, we have for the second term  in (\ref{eq20}):
\begin{equation}
\label{eq22}
\begin{aligned}
\left\|\left.\left(G_{i j}\right) \frac{\partial^2 \mathcal{V}_j^{(t)}(\Theta)}{\partial \Theta^2}\right|_{\Theta^{(t)}}\right\|&=\left\|\left.\left.\left.\frac{\partial L_{i,meta}(\hat{\mathbf{w}})}{\partial \hat{\mathbf{w}}}\right|_{\hat{\mathbf{w}}^{(t+1)}(\Theta)} ^T \frac{\partial L_{j,\text{train}-}(\mathbf{w})}{\partial \mathbf{w}}\right|_{\mathbf{w}^{(t)}} \frac{\partial^2 \mathcal{V}_j^{(t)}(\Theta)}{\partial \Theta^2}\right|_{\Theta^{(t)}}\right\| \\
 & \leq 2\sqrt{n n'}\mathcal{B} \rho^2.
\end{aligned}
\end{equation}

Combining the above two inequalities Eqs.(\ref{eq21}) and (\ref{eq22}), then we have

\begin{equation}
\begin{aligned}
& \quad \left\| \left. \left(\nabla_{\Theta^2}^2
L_{i,\text{meta}}^1\left(\hat{\mathbf{w}}^{(t+1)}(\Theta)\right)+\nabla_{\Theta^2}^2
L_{i,\text{meta}}^2\left(\hat{\mathbf{w}}^{(t+1)}(\Theta)\right)\right)\right|_{\Theta^{(t)}} \right\| \\
& \leq \lambda_1 \rho^2 (\lambda_1 n' M  n^2 \delta^2 +\sqrt{n n'}\mathcal{B}).
\end{aligned}
\end{equation}

Let $M_V=\lambda_1 \rho^2 (\lambda_1 n' M  n^2 \delta^2 +\sqrt{n n'}\mathcal{B})$,  based on Lagrange mean value theorem, for all $ \Theta_1, \Theta_2$ we have
\begin{equation}
\qquad\left\|\nabla \mathcal{L}^{{\text{meta} }}\left(\hat{\mathbf{w}}^{(t+1)}\left(\Theta_1\right)\right)-\nabla \mathcal{L}^{ {\text{meta} }}\left(\hat{\mathbf{w}}^{(t+1)}\left(\Theta_2\right)\right)\right\| \leq M_V\left\|\Theta_1-\Theta_2\right\|,
\end{equation}
where $ \small \nabla \mathcal{L}^{ {\text{meta} }}\left(\hat{\mathbf{w}}^{(t+1)}\left(\Theta_1\right)\right)
= \frac{1}{m} \sum\limits_{i=1}^m\left. \left(\nabla_{\Theta}
L_{i,\text{meta}}^1\left(\hat{\mathbf{w}}^{(t+1)}(\Theta)\right)+\nabla_{\Theta}
L_{i,\text{meta}}^2\left(\hat{\mathbf{w}}^{(t+1)}(\Theta)\right)\right)\right|_{\Theta_{1}} $.

Thus, the gradient of $\Theta$ with respect to the internal model loss (\ref{eq:ml}) is Lipschitz continuous.

\end{proof}

\section{Proof for Theorem \ref{th1}} \label{7.3}

%

\begin{proof}
The update equation of $\Theta$ in each iteration is as follows:
\begin{equation}
\label{eq26}
\begin{aligned}
 \Theta^{(t+1)}=\Theta^{(t)}-  \left.\lambda_2 \frac{1}{m} \sum_{i=1}^m \nabla {L}_{i,meta}\left(\hat{\mathbf{w}}^{(t+1)}\left(\Theta\right)\right)\right|_{\Theta^{(t)}},
\end{aligned}
\end{equation}
where $ \nabla {L}_{i,meta}\left(\hat{\mathbf{w}}^{(t+1)}\left(\Theta\right)\right)=\nabla_{\Theta}
L_{i,\text{meta}}^1\left(\hat{\mathbf{w}}^{(t+1)}(\Theta)\right)+\nabla_{\Theta}
L_{i,\text{meta}}^2\left(\hat{\mathbf{w}}^{(t+1)}(\Theta)\right)$.

And let $\nabla \mathcal{L}^{ {\text{meta} }}\left(\hat{\mathbf{w}}^{(t+1)}\left(\Theta\right)\right) = \frac{1}{m} \sum_{i=1}^m \nabla {L}_{i,meta}\left(\hat{\mathbf{w}}^{(t+1)}\left(\Theta\right)\right).$
Under the mini-batch $\Xi_t$ with a finite number of validation samples, the updating equation can be rewritten as:
$$
\Theta^{(t+1)}=\Theta^{(t)}-\left.\lambda_2^{(t)} \nabla \mathcal{L}^{ {\text{meta} }}\left(\hat{\mathbf{w}}^{(t+1)}\left(\Theta^{(t)}\right)\right)\right|_{\Xi_t} .
$$

Since the validation set is drawn uniformly from the entire data set, the above update equation can be written as:
$$
\Theta^{(t+1)}=\Theta^{(t)}-\lambda_2^{(t)}\left[\nabla \mathcal{L}^{ {\text{meta} }}\left(\hat{\mathbf{w}}^{(t+1)}\left(\Theta^{(t)}\right)\right)+\xi^{(t)}\right],
$$
where $\xi^{(t)}=\left.\nabla \mathcal{L}^{ {\text{meta} }}\left(\hat{\mathbf{w}}^{(t+1)}\left(\Theta^{(t)}\right)\right)\right|_{\Xi_t}-\nabla \mathcal{L}^{{\text{meta} }}\left(\hat{\mathbf{w}}^{(t+1)}\left(\Theta^{(t)}\right)\right)$.
Note that $\xi^{(t)}$ are i.i.d random variable with finite variance and samples are drawn uniformly at random. Furthermore, $\mathbb{E}\left[\xi^{(t)}\right]=0$. 

It can be observed that
\begin{equation}
\label{eq27}
\footnotesize
\begin{aligned}
& \mathcal{L}^{ {\text{meta} }}\left(\hat{\mathbf{w}}^{(t+1)}\left(\Theta^{(t+1)}\right)\right)-\mathcal{L}^{ {\text{meta} }}\left(\hat{\mathbf{w}}^{(t)}\left(\Theta^{(t)}\right)\right) \\
= & \left(\mathcal{L}^{{\text{meta} }}\left(\hat{\mathbf{w}}^{(t+1)}\left(\Theta^{(t+1)}\right)\right)-\mathcal{L}^{ {\text{meta} }}\left(\hat{\mathbf{w}}^{(t+1)}\left(\Theta^{(t)}\right)\right)\right)\\
+ & \left(\mathcal{L}^{ {\text{meta} }}\left(\hat{\mathbf{w}}^{(t+1)}\left(\Theta^{(t)}\right)\right)-\mathcal{L}^{ {\text{meta} }}\left(\hat{\mathbf{w}}^{(t)}\left(\Theta^{(t)}\right)\right)\right) .
\end{aligned}
\end{equation}

For the first term, 
by Lipschitz continuity of $\nabla_{\Theta} \mathcal{L}^{{\text{meta} }}\left(\hat{\mathbf{w}}^{(t+1)}(\Theta)\right)$ according to Lemma \ref{le}, we can deduce that:
\begin{equation}
\label{eq28}
\begin{aligned}
& \mathcal{L}^{{\text{meta} }}\left(\hat{\mathbf{w}}^{(t+1)}\left(\Theta^{(t+1)}\right)\right)-\mathcal{L}^{ {\text{meta} }}\left(\hat{\mathbf{w}}^{(t+1)}\left(\Theta^{(t)}\right)\right) \\
\leq & \left\langle\nabla_{\Theta} \mathcal{L}^{ {\text{meta} }}\left(\hat{\mathbf{w}}^{(t+1)}\left(\Theta^{(t)}\right)\right), \Theta^{(t+1)}-\Theta^{(t)}\right\rangle
+ \frac{M}{2}\left\|\Theta^{(t+1)}-\Theta^{(t)}\right\|_2^2 \\
= & \left\langle\nabla_{\Theta} \mathcal{L}^{ {\text{meta} }}\left(\hat{\mathbf{w}}^{(t+1)}\left(\Theta^{(t)}\right)\right),-\lambda_2^{(t)}\left[\nabla_{\Theta} \mathcal{L}^{{\text{meta} }}\left(\hat{\mathbf{w}}^{(t+1)}\left(\Theta^{(t)}\right)\right)+\xi^{(t)}\right]\right\rangle\\
 & \, + \frac{M {(\lambda_2^{(t)})}^2}{2}\left\|\nabla_{\Theta} \mathcal{L}^{{\text{meta} }}\left(\hat{\mathbf{w}}^{(t+1)}\left(\Theta^{(t)}\right)\right)+\xi^{(t)}\right\|_2^2 \\
= & -\left(\lambda_2^{(t)}-\frac{M {\lambda_2^{(t)}}^2}{2}\right)\left\|\nabla_{\Theta} \mathcal{L}^{{\text{meta} }}\left(\hat{\mathbf{w}}^{(t+1)}\left(\Theta^{(t)}\right)\right)\right\|_2^2\\
 & \,+ \frac{M {\lambda_2^{(t)}}^2}{2}\left\|\xi^{(t)}\right\|_2^2-\left(\lambda_2^{(t)}-M {\lambda_2^{(t)}}^2\right)\left\langle\nabla_{\Theta} \mathcal{L}^{{\text{meta} }}\left(\hat{\mathbf{w}}^{(t)}\left(\Theta^{(t)}\right)\right), \xi^{(t)}\right\rangle .
\end{aligned}
\end{equation}

For the second term, by Lipschitz smoothness of the \text{meta} loss function $\mathcal{L}^{{\text{meta} }}\left(\hat{\mathbf{w}}^{(t+1)}\left(\Theta^{(t+1)}\right)\right)$, we have
\begin{equation}
\label{eq29}
\begin{aligned}
& \mathcal{L}^{{\text{meta} }}\left(\hat{\mathbf{w}}^{(t+1)}\left(\Theta^{(t)}\right)\right)-\mathcal{L}^{ {\text{meta} }}\left(\hat{\mathbf{w}}^{(t)}\left(\Theta^{(t)}\right)\right) \\
\leq & \left\langle\nabla_{\mathbf{w}} \mathcal{L}^{ {\text{meta} }}\left(\hat{\mathbf{w}}^{(t)}\left(\Theta^{(t)}\right)\right), \hat{\mathbf{w}}^{(t+1)}\left(\Theta^{(t)}\right)-
\hat{\mathbf{w}}^{(t)}\left(\Theta^{(t)}\right)\right\rangle\\
&\,+\frac{M}{2}\left\|\hat{\mathbf{w}}^{(t+1)}\left(\Theta^{(t)}\right)-\hat{\mathbf{w}}^{(t)}\left(\Theta^{(t)}\right)\right\|_2^2 .
\end{aligned}
\end{equation}

And $
\hat{\mathbf{w}}^{(t+1)}\left(\Theta^{(t)}\right)-\hat{\mathbf{w}}^{(t)}\left(\Theta^{(t)}\right)=-\left.\lambda_1^{(t)} \nabla \mathcal{L}^{t rain}\left(\mathbf{w}^{(t)} ; \Theta^{(t)}\right)\right|_{\Psi_t},
$
where $\Psi_t$ denotes the mini-batch drawn randomly from the training dataset in the $t$-th iteration, $\nabla \mathcal{L}^{ {\text{train} }}\left(\mathbf{w}^{(t)} ; \Theta^{(t)}\right)=$ $\left.\sum\limits_{i=1}^n \alpha_i^{(t)} L_{i,\text{train}}^1(\mathbf{w}) + \beta_i^{(t)} L_{i,\text{train}}^2(\mathbf{w}) \right|_{\mathbf{w}^{(t)}}$,
and  $\alpha_i^{(t)} =\mathcal{V}\left(L_{i,\text{train}}^1(\mathbf{w}^{(t)}),L_{i,\text{train}}^2(\mathbf{w}^{(t)}); \Theta\right)$.

Since the mini-batch $\Psi_t$ is drawn uniformly at random, we can rewrite the update equation as:
$$
\hat{\mathbf{w}}^{(t+1)}\left(\Theta^{(t)}\right)=\hat{\mathbf{w}}^{(t)}\left(\Theta^{(t)}\right)-\lambda_1^{(t)}\left[\nabla \mathcal{L}^{\text{train}}\left(\mathbf{w}^{(t)} ; \Theta^{(t)}\right)+\psi^{(t)}\right],
$$
where $\psi^{(t)}=\left.\nabla \mathcal{L}^{t rain}\left(\mathbf{w}^{(t)} ; \Theta^{(t)}\right)\right|_{\Psi_t}-\nabla \mathcal{L}^{t rain}\left(\mathbf{w}^{(t)} ; \Theta^{(t)}\right)$. Note that $\psi^{(t)}$ are i.i.d. random variables with finite variance, since $\Psi_t$ are drawn i.i.d. with a finite number of samples, and thus $\mathbb{E}\left[\psi^{(t)}\right]=0, \mathbb{E}\left[\left\|\psi^{(t)}\right\|_2^2\right] \leq \sigma^2$. Thus we can rewrite (\ref{eq29}) as

\begin{equation}
\label{eq30}
\begin{aligned}
& \mathcal{L}^{{\text{meta} }}\left(\hat{\mathbf{w}}^{(t+1)}\left(\Theta^{(t)}\right)\right) - \mathcal{L}^{{\text{meta} }}\left(\hat{\mathbf{w}}^{(t)}\left(\Theta^{(t)}\right)\right) \\
\leq & \left\langle \mathcal{X}^{(t)}, -\lambda_1^{(t)} \mathcal{B}^{(t)} \right\rangle + \frac{M}{2} \left\| \lambda_1^{(t)} \mathcal{B}^{(t)} \right\|_2^2 \\
= & \frac{M {(\lambda_1^{(t)})}^2}{2} \left\| \mathcal{B}^{(t)} \right\|_2^2 - \lambda_1^{(t)} \left\langle \mathcal{X}^{(t)}, \mathcal{B}^{(t)} \right\rangle + \frac{M {(\lambda_1^{(t)})}^2}{2} \left\| \psi^{(t)} \right\|_2^2 \\
& + M {(\lambda_1^{(t)})}^2 \left\langle \mathcal{B}^{(t)}, \psi^{(t)} \right\rangle - \lambda_1^{(t)} \left\langle \mathcal{X}^{(t)}, \psi^{(t)} \right\rangle \\
\leq & \frac{M {(\lambda_1^{(t)})}^2 n^2 \rho^2}{2} + 2 \lambda_1^{(t)} \rho \left\| \mathcal{B}^{(t)} \right\|_2 + \frac{M \sigma^2 {(\lambda_1^{(t)})}^2}{2} + M {(\lambda_1^{(t)})}^2 \left\langle \mathcal{B}^{(t)}, \psi^{(t)} \right\rangle \\
& - \lambda_1^{(t)} \left\langle \mathcal{X}^{(t)}, \psi^{(t)} \right\rangle,
\end{aligned}
\end{equation}
where \( \mathcal{X}^{(t)} = \nabla_{\mathbf{w}} \mathcal{L}^{{\text{meta} }}\left(\hat{\mathbf{w}}^{(t)}\left(\Theta^{(t)}\right)\right) \) represents the gradient of the meta-loss with respect to the meta-parameters \( \hat{\mathbf{w}}^{(t)} \).
And \( \mathcal{B}^{(t)} = \nabla \mathcal{L}^{\text{train}}\left(\mathbf{w}^{(t)} ; \Theta^{(t)}\right) + \psi^{(t)} \) represents the sum of the gradient of the training loss with respect to the model parameters \( \mathbf{w}^{(t)} \) and the additional term \( \psi^{(t)} \).

The last inequality in (\ref{eq30}) holds since
\[
\left\langle \mathcal{X}^{(t)}, \mathcal{B}^{(t)} \right\rangle
\leq \left\|\mathcal{X}^{(t)}\right\|_2 \left\|\mathcal{B}^{(t)}\right\|_2
\leq 2 \rho \left\|\mathcal{B}^{(t)}\right\|_2.
\]
Since
$
\left\|\left.\frac{\partial L_{i,\text{train}}^1(\mathbf{w})}{\partial \mathbf{w}}\right|_{\mathbf{w}^{(t)}}\right\| \leq \rho, \quad \left\|\left.\frac{\partial L_{i,\text{train}}^2(\mathbf{w})}{\partial \mathbf{w}}\right|_{\mathbf{w}^{(t)}}\right\| \leq \rho, \quad \alpha_i^{(t)} + \beta_i^{(t)} = 1,
$
and
\[
\mathcal{B}^{(t)} = \left. \sum_{i=1}^{n} \alpha_i^{(t)} \frac{\partial L_{i,\text{train}}^1(\mathbf{w})}{\partial \mathbf{w}} + \beta_i^{(t)} \frac{\partial L_{i,\text{train}}^2(\mathbf{w})}{\partial \mathbf{w}} \right|_{\mathbf{w}^{(t)}},
\]
we use the triangle inequality to obtain that
$
\left\|\mathcal{B}^{(t)}\right\| \leq n \rho.
$

Combining Eq.(\ref{eq28}) and Eq.(\ref{eq30}), Eq.(\ref{eq27}) satisfies
$$
\begin{aligned}
& \mathcal{L}^{{\text{meta} }}\left(\hat{\mathbf{w}}^{(t+1)}\left(\Theta^{(t+1)}\right)\right)-\mathcal{L}^{{\text{meta} }}\left(\hat{\mathbf{w}}^{(t)}\left(\Theta^{(t)}\right)\right) \\
\leq & \frac{M {(\lambda_1^{(t)})}^2 n^2 \rho^2}{2}+2 \lambda_1^{(t)} \rho\left\|\mathcal{B}^{(t)}\right\|_2+\frac{M \sigma^2 {(\lambda_1^{(t)})}^2}{2}+M {(\lambda_1^{(t)})}^2\left\langle\mathcal{B}^{(t)}, \psi^{(t)}\right\rangle\\
& -\lambda_1^{(t)}\left\langle \mathcal{X}^{(t)}, \psi^{(t)}\right\rangle
 -\left(\lambda_2^{(t)}-\frac{M {\lambda_2^{(t)}}^2}{2}\right)\left\|\nabla_{\Theta} \mathcal{L}^{{\text{meta} }}\left(\hat{\mathbf{w}}^{(t+1)}\left(\Theta^{(t)}\right)\right)\right\|_2^2\\
 & +\frac{M {\lambda_2^{(t)}}^2}{2}\left\|\xi^{(t)}\right\|_2^2 -\left(\lambda_2^{(t)}-M {\lambda_2^{(t)}}^2\right)\left\langle\nabla_{\Theta} \mathcal{L}^{{\text{meta} }}\left(\hat{\mathbf{w}}^{(t)}\left(\Theta^{(t)}\right)\right), \xi^{(t)}\right\rangle.
\end{aligned}
$$

Since \( \mathbb{E}\left[\xi^{(t)}\right]=0, \mathbb{E}\left[\psi^{(t)}\right]=0 \) and \( \mathbb{E}\left[\left\|\xi^{(t)}\right\|_2^2\right] \leq \sigma^2 \), we rearrange the terms, and take expectations with respect to \( \xi^{(t)} \) and \( \psi^{(t)} \) on both sides
$$
\begin{aligned}
& \left(\lambda_2^{(t)}-\frac{M {\lambda_2^{(t)}}^2}{2}\right)\left\|\nabla_{\Theta} \mathcal{L}^{{\text{meta} }}\left(\hat{\mathbf{w}}^{(t+1)}\left(\Theta^{(t)}\right)\right)\right\|_2^2 \\
\leq & \frac{M {(\lambda_1^{(t)})}^2 n^2 \rho^2}{2}+2 \lambda_1^{(t)} \rho\left\|\mathcal{B}^{(t)}\right\|_2+\frac{M \sigma^2 {(\lambda_1^{(t)})}^2}{2}+\mathcal{L}^{{\text{meta} }}\left(\hat{\mathbf{w}}^{(t)}\left(\Theta^{(t)}\right)\right)\\
&-\mathcal{L}^{{\text{meta} }}\left(\hat{\mathbf{w}}^{(t+1)}\left(\Theta^{(t+1)}\right)\right)+\frac{M {(\lambda_2^{(t)})}^2}{2} \sigma^2,
\end{aligned}
$$
Summing up the above inequalities, we can obtain
{\small{
\begin{equation}
\label{eq31}
\begin{aligned}
& \sum_{t=1}^T \left(\lambda_2^{(t)}-\frac{M {\lambda_2^{(t)}}^2}{2}\right)\left\|\nabla_{\Theta} \mathcal{L}^{{\text{meta} }}\left(\hat{\mathbf{w}}^{(t+1)}\left(\Theta^{(t)}\right)\right)\right\|_2^2 \\
\leq & \mathcal{L}^{{\text{meta} }}\left(\hat{\mathbf{w}}^{(1)}\right)\left(\Theta^{(1)}\right)-\mathcal{L}^{{\text{meta} }}\left(\hat{\mathbf{w}}^{(T+1)}\right)\left(\Theta^{(T+1)}\right)+\frac{M\left(\sigma^2+n^2\rho^2\right)}{2} \sum_{t=1}^T {(\lambda_1^{(t)})}^2\\
& \,+\rho \sum_{t=1}^T \lambda_1^{(t)} \left\|\mathcal{B}^{(t)}\right\|_2+\frac{M}{2} \sum_{t=1}^T {(\lambda_2^{(t)})}^2 \sigma^2 \\
\leq & \mathcal{L}^{{\text{meta} }}\left(\hat{\mathbf{w}}^{(1)}\right)\left(\Theta^{(1)}\right)+\frac{M\left(\sigma^2+n^2\rho^2\right)}{2} \sum_{t=1}^T {(\lambda_1^{(t)})}^2 +\rho \sum_{t=1}^T \lambda_1^{(t)} \left\|\mathcal{B}^{(t)}\right\|_2+\frac{M}{2} \sum_{t=1}^T {(\lambda_2^{(t)})}^2 \sigma^2.
\end{aligned}
\end{equation}
}}
Furthermore, we can deduce that
\begin{equation*}
\begin{aligned}
& \min _t \mathbb{E}\left[\left\|\nabla_{\Theta} \mathcal{L}^{{\text{meta} }}\left(\hat{\mathbf{w}}^{(t+1)}\left(\Theta^{(t)}\right)\right)\right\|_2^2\right] \\
\leq & \frac{\sum_{t=1}^T \left(\lambda_2^{(t)}-\frac{M {\lambda_2^{(t)}}^2}{2}\right) \mathbb{E}\left\|\mathcal{X}^{(t+1)}\right\|_2^2}
{\sum_{t=1}^T \left(\lambda_2^{(t)}-\frac{M {\lambda_2^{(t)}}^2}{2}\right)} \\
\leq & \frac{\mathcal{L}^{{\text{meta} }}\left(\hat{\mathbf{w}}^{(1)}\right)\left(\Theta^{(1)}\right)+\frac{M\left(\sigma^2+n^2\rho^2\right)}{2} \sum_{t=1}^T {(\lambda_1^{(t)})}^2+\rho \sum_{t=1}^T \lambda_1^{(t)} \left\|\mathcal{B}^{(t)}\right\|_2+\frac{M}{2} \sum_{t=1}^T {(\lambda_2^{(t)})}^2 \sigma^2 }{\sum_{t=1}^T \left(2 \lambda_2^{(t)}-{M {\lambda_2^{(t)}}^2}\right)} \\
\end{aligned}
\end{equation*}
\begin{equation}
\label{eq32}
\begin{aligned}
\leq & \frac{2\mathcal{L}^{{\text{meta} }}\left(\hat{\mathbf{w}}^{(1)}\right)\left(\Theta^{(1)}\right)+{M\left(\sigma^2+n^2\rho^2\right)} \sum_{t=1}^T {(\lambda_1^{(t)})}^2+\rho \sum_{t=1}^T \lambda_1^{(t)} \left\|\mathcal{B}^{(t)}\right\|_2+\frac{M}{2} \sigma^2 \sum_{t=1}^T {(\lambda_2^{(t)})}^2 }{\sum_{t=1}^T  \lambda_2^{(t)}} \\
\leq & \frac{2\mathcal{L}^{{\text{meta} }}\left(\hat{\mathbf{w}}^{(1)}\right)\left(\Theta^{(1)}\right)+{M\left(\sigma^2+n^2\rho^2\right)} \sum_{t=1}^T {(\lambda_1^{(t)})}^2+\rho \sum_{t=1}^T \lambda_1^{(t)} \left\|\mathcal{B}^{(t)}\right\|_2+\frac{M}{2} \sigma^2 \sum_{t=1}^T {(\lambda_2^{(t)})}^2}{L \lambda_2^{(T)}} \\
= & \frac{2\mathcal{L}^{{\text{meta} }}\left(\hat{\mathbf{w}}^{(1)}\right)\left(\Theta^{(1)}\right)+{M\left(\sigma^2+n^2\rho^2\right)} \sum_{t=1}^T {(\lambda_1^{(t)})}^2+\rho \sum_{t=1}^T \lambda_1^{(t)} \left\|\mathcal{B}^{(t)}\right\|_2+\frac{M}{2} \sigma^2 \sum_{t=1}^T {(\lambda_2^{(t)})}^2}{L} \\
& \, \times \max \left\{M, \frac{\sqrt{T}}{c}\right\} \\
= & \mathcal{O}\left(\frac{C}{\sqrt{T}}\right) .
\end{aligned}
\end{equation}
In Eq. (\ref{eq32}), the third inequality  holds since $\sum_{t=1}^T \left(2 \lambda_2^{(t)}-{M {\lambda_2^{(t)}}^2}\right)=\sum_{t=1}^T \lambda_2^{(t)}\left(2-M\lambda_2^{(t)}\right) \geq \sum_{t=1}^T \lambda_2^{(t)}$, and the fourth equality  holds since $\lim\limits_{T \rightarrow \infty} \sum_{t=1}^T {\lambda_1^{(t)}}^2<\infty, \lim\limits_{T \rightarrow \infty} \sum_{t=1}^T {\lambda_2^{(t)}}^2<\infty, \lim\limits_{T \rightarrow \infty} \sum_{t=1}^T \lambda_1^{(t)}\left\|\nabla \mathcal{L}^{t rain}\left(\mathbf{w}^{(t)} ; \Theta^{(t)}\right)\right\|_2<\infty$. Thus we can conclude that our algorithm can always achieve $\min\limits_{0 \leq t \leq T} \mathbb{E}\left[\left\|\nabla \mathcal{L}^{ {\text{meta} }}\left(\Theta^{(t)}\right)\right\|_2^2\right] \leq \mathcal{O}\left(\frac{C}{\sqrt{T}}\right)$ in $T$ steps, and this finishes the proof of Theorem \ref{th1}.
\end{proof}


\section{Proof for Theorem \ref{th2}}\label{7.4}

\begin{proof}
It is easy to conclude that $\lambda_1^{(t)}$ satisfy $\sum_{t=1}^{\infty} \lambda_1^{(t)}=\infty, \sum_{t=1}^{\infty} {(\lambda_1^{(t)})}^2<\infty$. 
Under the sampled mini-batch $\Psi_t$ from the training dataset, the update equation of $\mathbf{w}$ can be rewritten as:
$$
\mathbf{w}^{(t+1)}=\mathbf{w}^{(t)}-\left.\lambda_1^{(t)} \nabla \mathcal{L}^{t raim}\left(\mathbf{w}^{(t)} ; \Theta^{(t+1)}\right)\right|_{\Psi_t},
$$
where $\nabla \mathcal{L}^{ {\text{train} }}\left(\mathbf{w}^{(t)} ; \Theta^{(t)}\right)=$ $\left.\sum\limits_{i=1}^n \alpha_i^{(t)} L_{i,\text{train}}^1(\mathbf{w}) + \beta_i^{(t)} L_{i,\text{train}}^2(\mathbf{w}) \right|_{\mathbf{w}^{(t)}}$,
$\alpha_i^{(t)} =\mathcal{V}\left(L_{i,\text{train}}^1(\mathbf{w}^{(t)}),L_{i,\text{train}}^2(\mathbf{w}^{(t)}); \Theta\right)$. 
Since the mini-batch $\Psi_t$ is drawn uniformly at random, we can rewrite the update equation as:
$$
\mathbf{w}^{(t+1)}=\mathbf{w}^{(t)}-\lambda_1^{(t)}\left[\nabla \mathcal{L}^{t rain}\left(\mathbf{w}^{(t)} ; \Theta^{(t+1)}\right)+\psi^{(t)}\right],
$$
where $\psi^{(t)}=\left.\nabla \mathcal{L}^{t rain}\left(\mathbf{w}^{(t)} ; \Theta^{(t)}\right)\right|_{\Psi_t}-\nabla \mathcal{L}^{t rain}\left(\mathbf{w}^{(t)} ; \Theta^{(t)}\right)$. Note that $\psi^{(t)}$ are i.i.d. random variables with finite variance, since $\Psi_t$ are drawn i.i.d. with a finite number of samples, and thus $\mathbb{E}\left[\psi^{(t)}\right]=0, \mathbb{E}\left[\left\|\psi^{(t)}\right\|_2^2\right] \leq \sigma^2$.

The objective function $\mathcal{L}^{t rain}(\mathbf{w} ; \Theta)$ defined in Eq. (\ref{eq18}) can be easily proved to be Lipschitz-smooth with constant $M$, and have $\rho$-bounded gradient with respect to training set. Observe that
\begin{equation}
\label{eq34}
\begin{aligned}
& \mathcal{L}^{t rain}\left(\mathbf{w}^{(t+1)} ; \Theta^{(t+1)}\right)-\mathcal{L}^{t rain}\left(\mathbf{w}^{(t)} ; \Theta^{(t)}\right) \\
= & \left(\mathcal{L}^{t rain}\left(\mathbf{w}^{(t+1)} ; \Theta^{(t+1)}\right)-\mathcal{L}^{t rain}\left(\mathbf{w}^{(t+1)} ; \Theta^{(t)}\right)\right)\\
& +\left(\mathcal{L}^{t rain}\left(\mathbf{w}^{(t+1)} ; \Theta^{(t)}\right)-\mathcal{L}^{t rain}\left(\mathbf{w}^{(t)} ; \Theta^{(t)}\right)\right) .
\end{aligned}
\end{equation}

For the first  term, we have
\begin{equation}
\label{eq35}
\begin{aligned}
& \mathcal{L}^{\text{train}}\left(\mathbf{w}^{(t+1)} ; \Theta^{(t+1)}\right) - \mathcal{L}^{\text{train}}\left(\mathbf{w}^{(t+1)} ; \Theta^{(t)}\right) \\
= & \frac{1}{n} \sum_{i=1}^n \left( \mathcal{V}_i^{(t+1)}(\Theta^{(t+1)}) - \mathcal{V}_i^{(t+1)}(\Theta^{(t)}) \right) \times L_{j,\text{train}-}(\mathbf{w}^{(t+1)}) \\
\leq & \frac{1}{n} \sum_{i=1}^n \left\{ \left\langle \frac{\partial \mathcal{V}_i^{(t+1)}}{\partial \Theta} \bigg|_{\Theta^{(t)}}, \Theta^{(t+1)} - \Theta^{(t)} \right\rangle + \frac{\delta}{2} \left\| \Theta^{(t+1)} - \Theta^{(t)} \right\|_2^2 \right\}  \times L_{j,\text{train}-}(\mathbf{w}^{(t+1)}) \\
= & \frac{1}{n} \sum_{i=1}^n \left\{ \left\langle \frac{\partial \mathcal{V}_i^{(t+1)}}{\partial \Theta} \bigg|_{\Theta^{(t)}}, -\lambda_2^{(t)} \left[ \nabla \mathcal{L}^{\text{meta}}\left(\hat{\mathbf{w}}^{(t+1)}\left(\Theta^{(t)}\right)\right) + \xi^{(t)} \right] \right\rangle \right. \\
& \left. + \frac{\delta {(\lambda_2^{(t)})}^2}{2} \left( \left\|\nabla \mathcal{L}^{\text{meta}}\left(\hat{\mathbf{w}}^{(t+1)}\left(\Theta^{(t)}\right)\right)\right\|_2^2 + \left\|\xi^{(t)}\right\|_2^2 + 2 \left\langle \nabla \mathcal{L}^{\text{meta}}\left(\hat{\mathbf{w}}^{(t+1)}\left(\Theta^{(t)}\right)\right), \xi^{(t)} \right\rangle \right) \right\} \\
& \times L_{j,\text{train}-}(\mathbf{w}^{(t+1)}),
\end{aligned}
\end{equation}
where \(\mathcal{V}_i^{(t+1)}(\Theta)=\mathcal{V}\left(L_{i,\text{train}}^1(\mathbf{w}^{(t+1)}), L_{i,\text{train}}^2(\mathbf{w}^{(t+1)}); \Theta \right)\), and $L_{j,\text{train}-}(\mathbf{w}^{(t+1)}) = L_{j,\text{train}}^1(\mathbf{w}^{(t+1)}) - L_{j,\text{train}}^2(\mathbf{w}^{(t+1)}). $

For the second term, by Lipschitz smoothness of the training loss function $\mathcal{L}^{t rain}\left(\hat{\mathbf{w}}^{(t+1)}\left(\Theta^{(t+1)}\right)\right)$, we have
\begin{equation}
\label{eq36}
\begin{aligned}
& \mathcal{L}^{t rain}\left(\mathbf{w}^{(t+1)} ; \Theta^{(t)}\right)-\mathcal{L}^{t rain}\left(\mathbf{w}^{(t)} ; \Theta^{(t)}\right) \\
\leq & \left\langle\nabla \mathcal{L}^{t rain}\left(\mathbf{w}^{(t)} ; \Theta^{(t)}\right), \mathbf{w}^{(t+1)}-\mathbf{w}^{(t)}\right\rangle+\frac{M}{2}\left\|\mathbf{w}^{(t+1)}-\mathbf{w}^{(t)}\right\|_2^2 \\
= & \left\langle\nabla \mathcal{L}^{t rain}\left(\mathbf{w}^{(t)} ; \Theta^{(t)}\right),-\lambda_1^{(t)}\left[\nabla \mathcal{L}^{t rain}\left(\mathbf{w}^{(t)} ; \Theta^{(t)}\right)+\psi^{(t)}\right]\right\rangle\\
& +\frac{M {(\lambda_1^{(t)})}^2}{2}\left\|\nabla \mathcal{L}^{t rain}\left(\mathbf{w}^{(t)} ; \Theta^{(t)}\right)+\psi^{(t)}\right\|_2^2 \\
= & -\left(\lambda_1^{(t)}-\frac{M {(\lambda_1^{(t)})}^2}{2}\right)\left\|\nabla \mathcal{L}^{t rain}\left(\mathbf{w}^{(t)} ; \Theta^{(t)}\right)\right\|_2^2+\frac{M {(\lambda_1^{(t)})}^2}{2}\left\|\psi^{(t)}\right\|_2^2\\
& \, -\left(\lambda_1^{(t)}-M {(\lambda_1^{(t)})}^2\right)\left\langle\nabla \mathcal{L}^{t rain}\left(\mathbf{w}^{(t)} ; \Theta^{(t)}\right), \psi^{(t)}\right\rangle .
\end{aligned}
\end{equation}

Therefore, substituting Eq.(\ref{eq35}) and Eq.(\ref{eq36}) into Eq.(\ref{eq34}) yields
\begin{equation}
\label{eq37}
\begin{aligned}
& \mathcal{L}^{t rain}\left(\mathbf{w}^{(t+1)} ; \Theta^{(t+1)}\right)-\mathcal{L}^{t rain}\left(\mathbf{w}^{(t)} ; \Theta^{(t)}\right) \\
\leq & \frac{1}{n} \sum_{i=1}^n\left\{\left\langle\left.\frac{\partial \mathcal{V}_i^{(t+1)}(\Theta)}{\partial \Theta}\right|_{\Theta^{(t)}},-\lambda_2^{(t)}\left[\nabla \mathcal{L}^{{\text{meta} }}\left(\hat{\mathbf{w}}^{(t+1)}\left(\Theta^{(t)}\right)\right)+\xi^{(t)}\right]\right\rangle \right.\\
&\left.+\frac{\delta {(\lambda_2^{(t)})}^2}{2}\left(\left\|\nabla \mathcal{L}^{{\text{meta} }}\left(\hat{\mathbf{w}}^{(t+1)}\left(\Theta^{(t)}\right)\right)\right\|_2^2+\left\|\xi^{(t)}\right\|_2^2+2\left\langle\nabla \mathcal{L}^{{\text{meta} }}\left(\hat{\mathbf{w}}^{(t+1)}\left(\Theta^{(t)}\right)\right), \xi^{(t)}\right\rangle\right)\right\}\\
& \times L_{j,\text{train}-}(\mathbf{w}^{(t+1)})
-\left(\lambda_1^{(t)}-\frac{M {(\lambda_1^{(t)})}^2}{2}\right)\left\|\nabla \mathcal{L}^{t rain}\left(\mathbf{w}^{(t)} ; \Theta^{(t)}\right)\right\|_2^2+\frac{M {(\lambda_1^{(t)})}^2}{2}\left\|\psi^{(t)}\right\|_2^2\\
& -\left(\lambda_1^{(t)}-M {(\lambda_1^{(t)})}^2\right)\left\langle\nabla \mathcal{L}^{t rain}\left(\mathbf{w}^{(t)} ; \Theta^{(t)}\right), \psi^{(t)}\right\rangle .
\end{aligned}
\end{equation}

Taking expectation of the both sides of Eq.(\ref{eq37}) and based on $\mathbb{E}\left[\xi^{(t)}\right]=0, \mathbb{E}\left[\psi^{(t)}\right]=0$, we have

\begin{equation}
\label{eq38}
\begin{aligned}
& \mathbb{E}\left[\mathcal{L}^{ {\text{train} }}\left(\mathbf{w}^{(t+1)} ; \Theta^{(t+1)}\right)\right]-\mathbb{E}\left[\mathcal{L}^{ {\text{train} }}\left(\mathbf{w}^{(t)} ; \Theta^{(t)}\right)\right] \\
\leq  & \mathbb{E}\frac{1}{n} \sum_{i=1}^n\left\{\left\langle\left.\frac{\partial \mathcal{V}_i^{(t+1)}(\Theta)}{\partial \Theta}\right|_{\Theta^{(t)}},-\lambda_2^{(t)}\nabla \mathcal{L}^{{\text{meta} }}\left(\hat{\mathbf{w}}^{(t+1)}\left(\Theta^{(t)}\right)\right)\right\rangle \right.\\
& \left.+\frac{\delta {(\lambda_2^{(t)})}^2}{2}\left(\left\|\nabla \mathcal{L}^{{\text{meta} }}\left(\hat{\mathbf{w}}^{(t+1)}\left(\Theta^{(t)}\right)\right)\right\|_2^2+\left\|\xi^{(t)}\right\|_2^2\right)\right\}
\times  L_{j,\text{train}-}(\mathbf{w}^{(t+1)})\\
& -\left(\lambda_1^{(1)}-\frac{M {(\lambda_1^{(t)})}^2}{2}\right)\mathbb{E}\left[\left\|\nabla \mathcal{L}^{t rain}\left(\mathbf{w}^{(t)} ; \Theta^{(t)}\right)\right\|_2^2\right]+\frac{M {(\lambda_1^{(t)})}^2}{2}\mathbb{E}\left[\left\|\psi^{(t)}\right\|_2^2\right].
\end{aligned}
\end{equation}
According to $\frac{1}{n} \sum_{i=1}^n\left\|L_{i,\text{train}}^1\left(\mathbf{w}^{(t)}\right)\right\| \leq K$, $\frac{1}{n} \sum_{i=1}^n\left\|L_{i,\text{train}}^2\left(\mathbf{w}^{(t)}\right)\right\| \leq K$ and triangle inequality, we can get $\frac{1}{n} \sum_{i=1}^n\left\|L_{i,\text{train}}^1\left(\mathbf{w}^{(t)}\right)-L_{i,\text{train}}^2\left(\mathbf{w}^{(t)}\right)\right\| \leq 2K$. Since $\sum_{t=0}^{\infty} {(\lambda_1^{(t)})}^2<\infty, \sum_{t=0}^{\infty} {(\lambda_2^{(t)})}^2<\infty, \sum_{t=1}^T {(\lambda_2^{(t)})}\left\|\nabla \mathcal{L}^{{\text{meta} }}\left(\hat{\mathbf{w}}^{(t+1)}\left(\Theta^{(t)}\right)\right)\right\|<\infty$, and 
Summing up the above inequalities over $t=1, \ldots, T$ in both sides and rearranging the terms, we obtain

$$
\begin{aligned}
& \sum_{t=1}^T\left(\lambda_1^{(t)}-\frac{M {(\lambda_1^{(t)})}^2}{2}\right)\mathbb{E}\left\|\nabla \mathcal{L}^{t rain}\left(\mathbf{w}^{(t)} ; \Theta^{(t)}\right)\right\|_2^2 \\
\leq & \sum_{t=1}^T \mathbb{E}\frac{1}{n} \sum_{i=1}^n\left\{\left\langle\left.\frac{\partial \mathcal{V}_i^{(t+1)}(\Theta)}{\partial \Theta}\right|_{\Theta^{(t)}},-\lambda_2^{(t)}\nabla \mathcal{L}^{{\text{meta} }}\left(\hat{\mathbf{w}}^{(t+1)}\left(\Theta^{(t)}\right)\right)\right\rangle \right.\\
&\left.
+\frac{\delta {(\lambda_2^{(t)})}^2}{2}\left(\left\|\nabla \mathcal{L}^{ {\text{meta} }}\left(\hat{\mathbf{w}}^{(t+1)}\left(\Theta^{(t)}\right)\right)\right\|_2^2+\left\|\xi^{(t)}\right\|_2^2\right)\right\}
 \times L_{j,\text{train}-}(\mathbf{w}^{(t+1)})\\
&+\sum_{t=1}^T\frac{M {(\lambda_1^{(1)})}^2}{2}\mathbb{E}\left[\left\|\psi^{(t)}\right\|_2^2\right]+\mathbb{E}\left[\mathcal{L}^{ {\text{train} }}\left(\mathbf{w}^{(1)} ; \Theta^{(1)}\right)\right]-\mathbb{E}\left[\mathcal{L}^{ {\text{train} }}\left(\mathbf{w}^{(T+1)} ; \Theta^{(T+1)}\right)\right]\\
\end{aligned}
$$
$$
\begin{aligned}
\leq & \sum_{t=1}^T \lambda_2^{(t)} \mathbb{E} \frac{1}{n} \sum_{i=1}^n\left\|L_{j,\text{train}-}(\mathbf{w}^{(t+1)})\right\|\left\|\left.\frac{\partial \mathcal{V}_i^{(t+1)}(\Theta)}{\partial \Theta}\right|_{\Theta^{(t)}}\right\|\left\|\nabla \mathcal{L}^{{\text{meta} }}\left(\hat{\mathbf{w}}^{(t+1)}\left(\Theta^{(t)}\right)\right)\right\|\\
&+\sum_{t=1}^T \frac{M {(\lambda_1^{(1)})}^2}{2} \mathbb{E}\left[\left\|\psi^{(t)}\right\|_2^2\right]+\mathbb{E}\left[\mathcal{L}^{{\text{train} }}\left(\mathbf{w}^{(1)} ; \Theta^{(1)}\right)\right]-\mathbb{E}\left[\mathcal{L}^{t rain}\left(\mathbf{w}^{(T+1)} ; \Theta^{(T+1)}\right)\right]\\
&+\sum_{t=1}^T \frac{\delta {(\lambda_2^{(t)})}^2}{2}\left\{\frac{1}{n} \sum_{i=1}^n\left\|L_{j,\text{train}-}(\mathbf{w}^{(t+1)})\right\|\left(\mathbb{E}\left\|\nabla \mathcal{L}^{{\text{meta} }}\left(\hat{\mathbf{w}}^{(t+1)}\left(\Theta^{(t)}\right)\right)\right\|_2^2+\mathbb{E}\left\|\xi^{(t)}\right\|_2^2\right)\right\} \\
\leq & 2\delta K \sum_{t=1}^T \lambda_2^{(t)}\left\|\nabla \mathcal{L}^{ {\text{meta} }}\left(\hat{\mathbf{w}}^{(t+1)}\left(\Theta^{(t)}\right)\right)\right\|+\sum_{t=1}^T \frac{M {(\lambda_1^{(t)})}^2}{2} \sigma^2\\
& +\mathbb{E}\left[\mathcal{L}^{t rain}\left(\mathbf{w}^{(1)} ; \Theta^{(1)}\right)\right]+\sum_{t=1}^T \frac{\delta {(\lambda_2^{(t)})}^2}{2}\left\{K\left(\rho^2+\sigma^2\right)\right\}<\infty .
\end{aligned}
$$

Since $\sum_{t=1}^T\left(2 \lambda_1^{(t)}-M {(\lambda_1^{(t)})}^2\right)=\sum_{t=1}^T \lambda_1^{(t)}\left(2-M\lambda_1^{(t)}\right) \geq \sum_{t=1}^T \lambda_1^{(t)}$, we have
$$
\begin{aligned}
& \min _t \mathbb{E}\left[\left\|\nabla \mathcal{L}^{t rain}\left(\mathbf{w}^{(t)} ; \Theta^{(t)}\right)\right\|_2^2\right] \\
\leq & \frac{2\delta K \sum_{t=1}^T \lambda_2^{(t)}\left\|\nabla \mathcal{L}^{ {\text{meta} }}\left(\hat{\mathbf{w}}^{(t+1)}\left(\Theta^{(t)}\right)\right)\right\|+\sum_{t=1}^T \frac{M {(\lambda_1^{(t)})}^2}{2} \sigma^2+\mathcal{D}}{\sum_{t=1}^T\left(\lambda_1^{(t)}-\frac{M {(\lambda_1^{(t)})}^2}{2}\right)} \\
\leq & \frac{2\delta K \sum_{t=1}^T \lambda_2^{(t)}\left\|\nabla \mathcal{L}^{ {\text{meta} }}\left(\hat{\mathbf{w}}^{(t+1)}\left(\Theta^{(t)}\right)\right)\right\|+\sum_{t=1}^T \frac{M {(\lambda_1^{(t)})}^2}{2} \sigma^2+\mathcal{D}}{\sum_{t=1}^T\lambda_1^{(t)}} \\
\leq & \frac{2\delta K \sum_{t=1}^T \lambda_2^{(t)}\left\|\nabla \mathcal{L}^{ {\text{meta} }}\left(\hat{\mathbf{w}}^{(t+1)}\left(\Theta^{(t)}\right)\right)\right\|+\sum_{t=1}^T \frac{M {(\lambda_1^{(t)})}^2}{2} \sigma^2+\mathcal{D}}{T\lambda_1^{(t)}} \\
\leq & \frac{2\delta K \sum_{t=1}^T \lambda_2^{(t)}\left\|\nabla \mathcal{L}^{ {\text{meta} }}\left(\hat{\mathbf{w}}^{(t+1)}\left(\Theta^{(t)}\right)\right)\right\|+\sum_{t=1}^T \frac{M {(\lambda_1^{(t)})}^2}{2} \sigma^2+\mathcal{D}}{T} \max \left\{M, \frac{\sqrt{T}}{c}\right\} \\ =&\mathcal{O}\left(\frac{C}{\sqrt{T}}\right),  \\
&
\end{aligned}
$$
where $ \mathcal{D}= \mathbb{E}\left[\mathcal{L}^{t rain}\left(\mathbf{w}^{(1)} ; \Theta^{(1)}\right)\right]+\sum_{t=1}^T \frac{\delta {(\lambda_2^{(t)})}^2}{2}\left\{K\left(\rho^2+\sigma^2\right)\right\} $.
Thus,  we can conclude that the algorithm can always achieve $$\min\limits_{0 \leq t \leq T} \mathbb{E}\left[\left\|\nabla \mathcal{L}^{t rain}\left(\mathbf{w}^{(t)} ; \Theta^{(t)}\right)\right\|_2^2\right] \leq \mathcal{O}\left(\frac{C}{\sqrt{T}}\right)$$ in $T$ steps, and this completes the proof of Theorem \ref{th2}.
\end{proof}

\end{document}